\newcommand{\figonecol}{0.72}
\newcommand{\figtwocol}{0.93}
\begin{document}

% If your paper is accepted and the title of your paper is very long,
% the style will print as headings an error message. Use the following
% command to supply a shorter title of your paper so that it can be
% used as heading.
%
%\runningtitle{I use this title instead because the last one was very long}

% If your paper is accepted and the number of authors is large, the
% style will print as headings an error message. Use the following
% command to supply a shorter version of the author's names so that
% they can be used as headings (for example, use only the surnames)
%
%\runningauthor{Surname 1, Surname 2, Surname 3, ...., Surname n}

\twocolumn[

\aistatstitle{Estimation of partially known Gaussian graphical models with score-based structural priors}

\aistatsauthor{ Martín Sevilla \And Antonio G. Marques \And  Santiago Segarra }

\aistatsaddress{ Rice University, USA \And  King Juan Carlos University, Spain \And Rice University, USA } ]

\begin{abstract}
We propose a novel algorithm for the support estimation of partially known Gaussian graphical models that incorporates prior information about the underlying graph. 
In contrast to classical approaches that provide a \emph{point estimate} based on a maximum likelihood or a maximum a posteriori criterion using (simple) priors \emph{on the precision matrix}, we consider a prior \emph{on the graph} and rely on annealed Langevin diffusion to generate \emph{samples from the posterior distribution}.
Since the Langevin sampler requires access to the score function of the underlying graph prior, we use graph neural networks to effectively estimate the score from a graph dataset (either available beforehand or generated from a known distribution). 
Numerical experiments demonstrate the benefits of our approach.
% We propose a novel algorithm for the support estimation of partially known Gaussian graphical models that, on top of leveraging nodal observations, incorporate prior information about the underlying graph. In contrast to classical approaches that provide a point estimate based on a maximum likelihood or maximum a posteriori approach using (simple) priors on the precision matrix, we consider a prior on the graph and rely on annealed Langevin diffusion to generate samples from the posterior distribution. Since the Langevin sampler requires access to the score function of the underlying graph prior, we use graph neural networks to effectively estimate the score from a graph dataset (either available beforehand or generated from a known distribution). Numerical experiments in different setups demonstrate the benefits of our approach.
\end{abstract}

\section{INTRODUCTION} \label{sec:introduction}
Graphical models are useful probabilistic tools represented by graphs $\ccalG = (\ccalV, \ccalE)$, where nodes $\ccalV$ encode random variables, and edges $\ccalE$ encode information regarding the variables' joint distribution.
Markov random fields (MRFs) -- an important class of graphical models -- offer the attractive property that the absence of an edge between two nodes means that the two associated random variables are conditionally independent given the rest~\citep{bishop}.
When the distribution is a multivariate Gaussian, the MRF is said to be a Gaussian Markov random field (GMRF) or Gaussian graphical model (GGM)~\citep{rue2005gaussian}. 
GGMs have been used to model complex relationships in a wide variety of disciplines, with relevant examples including gene interactions~\citep{dobra2004sparse,wang_ggm_2020}, spectrometric data~\citep{CODAZZI2022107416}, metabolic association networks~\citep{tan2017bayesian}, macroeconomic growth~\citep{dobra2010modeling}, and social networks~\citep{li2020high}.

The success of GGMs in such a broad range of datasets stems partly from their intuitive interpretability. To be specific, let $\bbA\!\in\! \{0,\!1\}^{n\times n}$ be the adjacency matrix of $\ccalG$, with $n = |\ccalV|$, and let $\bbx \! \in \! \reals^n$ be a random vector such that $\bbx\sim \ccalN(\pmb{0}, \bbSigma)$.
We then say that $\bbx$ is a GGM with respect to $\ccalG$ if and only if the precision matrix $\bbTheta=\bbSigma^{-1}$ and $\bbA$ have the same support (i.e., the same zero pattern)~\citep{rue2005gaussian}. 
As a result, the graph associated with a GGM can be estimated from the non-zero values of $\bbTheta$. 

\vspace{1mm}
\noindent {\bf Related work and limitations.}
The problem of estimating the precision matrix $\bbTheta$ (and, consequently, its support) is classically known as \textit{covariance selection}~\citep{dempster}. 
Specifically, given a set of $k$ observations $\bbx_1,\ldots,\bbx_k$, each of dimension $n$, the goal is to recover the $n\times n$ matrix $\bbTheta$ and, in particular, $\supp{\bbTheta}$, which reveals the conditional independence relationships in the GGM. Given that $p(\bbx \mid \bbTheta)$ has a closed-form expression for a GGM, a straightforward approach is to find the maximum likelihood (ML) estimator for $\bbTheta$, which is given by the inverse of the \emph{sample} covariance $\bbS$~\citep{casella2021statistical}.
However, $\bbS^{-1}$ typically does not contain exact zero entries.
Hence, one common technique is thresholding $\bbS^{-1}$~\citep{thresholding}, but this may yield unsatisfactory results if the threshold is not adequately chosen or if the number of observations $k$ is limited.

More sophisticated techniques generally propose \emph{penalties} in optimization problems that use the log-likelihood as cost function~\citep{joint_ggm_survey, williams2020beyond}.
From a Bayesian standpoint, these penalties can be considered prior distributions $p(\bbTheta)$.
For any penalty function $P(\bbTheta)$, there is an associated prior distribution $p(\bbTheta)\propto \exp(-P(\bbTheta))$ such that the penalized ML estimation boils down to
\begin{align} \label{eq:penalized_loglikelihood}
    \bbTheta_{\mathrm{est}}
    &= \argmax_{\bbTheta \succeq 0} p(\bbx_1,\ldots,\bbx_k \mid \bbTheta) p(\bbTheta) \nonumber
    \\
    & = 
    \argmax_{\bbTheta \succeq 0} \log \det\bbTheta - \tr\left(\bbS\bbTheta \right) - P(\bbTheta),
\end{align}
with the most popular penalty being $P(\bbTheta) =\lambda\sum_{i \neq j}|\Theta_{ij}|$, which translates to a Laplace prior from a Bayesian perspective.
This penalty encourages sparsity in $\bbTheta$ and is convex, rendering~\eqref{eq:penalized_loglikelihood} easy to optimize using the graphical lasso (GL) algorithm~\citep{glasso_banerjee,glasso}.
A similar approach that allows to penalize each element in $\bbTheta$ by a different value is the \emph{weighted} graphical lasso (WGL), which imposes a penalty of the form $P(\bbTheta) =\norm{\bbLambda \circ \bbTheta}_1=\sum_{ij}\Lambda_{ij}|\Theta_{ij}|$~\citep{weighted_glasso_1, weighted_glasso_2, weighted_glasso_3}.
Non-convex regularizers were also proposed in the literature~\citep{williams2020beyond}. 
While more involved, they all boil down to encouraging different forms of sparsity.

Albeit less numerous, works incorporating prior distributions that do not involve sparsity also exist.
In~\citet{ggm_metabolomic}, a base graph structure is used as a prior, which could become too restrictive as it requires information from the specific graph whose support we want to estimate.
In~\citet{gwishart_prior}, a G-Wishart prior distribution is used together with a Markov chain Monte Carlo (MCMC) sampler to determine the underlying GGM.
A similar approach is taken in~\citet{friedman2003being}, but using a standard Wishart instead.
Another framework is given in~\citet{grab}, focusing on modularity.
Even though these approaches do not assume mere sparsity, they propose pre-specified prior structures that may not apply to the graph under study. 

All techniques proposed so far for GGM estimation impose limitations in the prior knowledge that can be incorporated.
Importantly, the imposed priors are \emph{too simple} (e.g., sparsity) or \emph{ restrictive} (e.g., G-Wishart), and these are \emph{imposed on $\bbTheta$} while the natural approach would be to impose the structural priors on the underlying adjacency matrix $\bbA$.

\vspace{1mm}
\noindent {\bf Addressing these limitations.}
This paper proposes a new approach that allows the introduction of \emph{arbitrary} prior information \emph{directly} on $\bbA$, based on a dataset of adjacency matrices $\ccalA$ (of potentially varying sizes) whose distribution we use as a prior $p(\bbA)$.
This is particularly useful for real-world applications where we often have datasets of graphs instead of a closed-form prior distribution $p(\bbA)$.
For instance, when learning brain networks, leveraging graphs from other patients is feasible (and valuable) because they often share similar structures. 
This idea also applies to other areas like molecular datasets or social networks.
Additionally, many random graphs do not have a closed-form distribution, but generating samples to create a synthetic $\ccalA$ is relatively easy. 
%Relevant examples include Barabási--Albert and small world graphs. In fact, even if the actual distribution $p(\bbA)$ were known, one could still sample from it and generate the dataset of graphs $\ccalA$.

Furthermore, we allow edge-to-edge constraints (in a WGL fashion) so that edges that are \emph{known} to either exist or be absent lead to values of $1$ or $0$ in $\bbA$, respectively.
This is useful in cases where some pairs of variables are known to be conditionally independent, but we want to estimate the rest of the graph.
Examples include gene expression data~\citep{weighted_glasso_1}, neurotoxicology tests~\citep{grzebyk2004identification}, social networks~\citep{Wu852798}, or brain networks~\citep{SIMPSON2015310}.

Our algorithm is based on Langevin dynamics, an MCMC sampler~\citep{MCMCbook, Roberts1996ExponentialCO}.
We directly sample from the posterior by defining a stochastic dynamic process whose stationary distribution matches the desired posterior distribution. 
If the interest is in a point estimate, we can readily use the samples to estimate, e.g.,  the posterior mean of the missing values in $\bbA$.

\vspace{1mm}
\noindent {\bf Contributions.}
Our three main contributions are:
\\
1) We propose a novel GGM estimator based on \emph{sampling} from a posterior distribution rather than finding the ML or MAP estimators and show that our estimator is consistent.
\\
2) We leverage annealed Langevin dynamics to implement such an estimator, which allows us to incorporate an arbitrary prior distribution learned from data. We allow the known graphs to be of different sizes, as is the case in many practical applications.
\\
3) Through numerical experiments, we show that incorporating arbitrary prior distributions outperforms estimators that only consider sparsity as prior information or are just based on the likelihood of the observations.

\vspace{1mm}
\noindent {\bf Notation.} 
Scalars, vectors, and matrices are denoted by lowercase ($y$), lowercase bold ($\bby$), and uppercase bold ($\bbY$) letters, respectively.
For a matrix $\bbY$, $Y_{ij}$ denotes its $(i,j)$-th entry.
For a vector $\bby$, its $i$-th component is represented by $y_i$.
$\bbI$ is the identity matrix of appropriate dimensions.
The operation $\circ$ denotes the Hadamard product. 
We define the element-wise indicator function as $\ind{\cdot}$, and the support of a matrix as $\supp{\bbY} = \ind{\bbY \neq 0}$ (i.e., a binary-valued matrix that is $0$ in the $(i,j)$ entries such that $Y_{ij}=0$ and is $1$ otherwise).

\section{PROBLEM FORMULATION}
We consider an unweighted and undirected graph $\ccalG$ with no self-loops that consists of $n$ nodes and a partially known set of edges.
The edge information is encoded in the adjacency matrix $\bbA \in \{0,1 \}^{n\times n}$.
To distinguish between the entries of $\bbA$ that are known and the ones we aim to estimate, we define two sets of indices $\ccalO$ and $\ccalU$ such that
\begin{eqnarray}
    \ccalO &=& \left\{(i,j) : A_{ij} \ \text{is observed} \land i < j\right\}, ~\text{and}\\
    \ccalU &=& \left\{(i,j) : A_{ij} \ \text{is unknown} \land i < j\right\} .
\end{eqnarray}
Throughout this work, we refer to the known and unknown fractions of the adjacency matrix as $\bbA^\ccalO$ and $\bbA^\ccalU$, respectively.
The condition that $i<j$ implies that we do not take into account the diagonal (since the graph has no self-loops), and we just consider the upper-triangular part of $\bbA$ (since it is symmetric).
In case the set of edges is completely unknown, we can estimate some of its entries with high confidence and consider them known, as we show in Section~\ref{sec:num_results}.

Apart from the known fraction of the graph, we also assume that $k$ independent observations are available.
We arrange them as columns in the matrix $\bbX =\begin{bmatrix} \bbx_1 \ \ldots \ \bbx_k \end{bmatrix} \in \reals^{n \times k}$.
Each observation $\bbx$ follows a normal distribution $\ccalN \left(\pmb{0}, \bbTheta_0^{-1}\right)$, where $\bbTheta_0$ is the true precision matrix.
Since $\bbA_0 = \supp{\bbTheta_0}$, then $\bbTheta_0$ is known to be $0$ where $\bbA_0^\ccalO$ is $0$ and is known to be different from $0$ where $\bbA_0^\ccalO$ is $1$.
We are also given a set of adjacency matrices $\ccalA$ drawn from the distribution $p(\bbA)$, the same distribution from which $\bbA_0$ was drawn.
In this setting, our problem is defined as follows:

\vspace{2mm}
\begin{problem}\label{prob:main}
    Given the $k$ observations $\bbX$, a partially known adjacency matrix $\bbA_0^\ccalO$, and structural prior information given by a set of matrices $\ccalA$, find an estimate of $\bbA_0^\ccalU$.
\end{problem}
\vspace{2mm}

A natural way to solve Problem~\ref{prob:main} would be to compute the MAP, forcing the entries of the estimate to be equal to those of $\bbA_0^\ccalO$ for all positions $(i,j) \in \ccalO$. Mathematically, this is given by
\begin{alignat}{2}
    \label{eq:map}
    \hbA_\MAP
    = 
    &\argmax_{\bbA} & \ & p\left(\bbX\mid\bbA \right) p\left(\bbA\right) \\
    &\text{subject to} & & A_{ij} = A^\ccalO_{0_{ij}} \,\,\,\,\,\,\, \forall (i,j) \in \ccalO. \nonumber
\end{alignat}
There are two main issues when solving~\eqref{eq:map}, which we will describe in detail next.
First, the likelihood $p\left(\bbX \mid \bbA \right)$ is not easy to calculate, since only the expression for $p(\bbX \mid \bbTheta)$ is available, which is
\begin{equation} \label{eq:likelihood_theta}
    p(\bbX \mid \bbTheta) 
    = 
    \sqrt{\frac{\det \bbTheta^k}{(2\pi)^{nk}}}
    %\exp\left(-\frac12 \sum_{i=1}^k \bbx_i^\top \bbTheta \bbx_i\right)
    \exp\left(-\frac{k}{2} \tr\left( \bbS \bbTheta \right)\right),
\end{equation}
where $\bbS = \frac{1}{k} \bbX \bbX^\top$ is the sample covariance.
Hence, computing $p\left(\bbX \mid \bbA \right)$ requires integrating~\eqref{eq:likelihood_theta} over all possible precision matrices such that $\supp{\bbTheta} = \bbA$, which is infeasible to do.
Second, even if $p(\bbX \mid \bbA)$ were available, carrying out the maximization in~\eqref{eq:map} would be intractable since the feasible set contains $2^{|\ccalU|}$ possible matrices.

Within the realm of point estimators, this work proposes an alternative approach to Problem~\ref{prob:main}, under which we estimate $\bbA_0$ as the posterior mean instead of the posterior mode.
That is, we aim to compute
\begin{equation}
    \E{\bbA \mid \bbX}
    = \sum_{\bbA \, \mathrm{s. t.} \, A_{ij} = A^\ccalO_{ij}} \bbA \cdot p(\bbA \mid \bbX) .
    \label{eq:posterior_mean}
\end{equation}
Note that the estimation of $\bbA_0$ can be considered a classification problem, where each edge is classified as $0$ or $1$.
Hence, choosing a thresholded version of~\eqref{eq:posterior_mean} as an estimator offers the desirable property of minimizing the edge classification error rate.
However, even if we knew $p(\bbA \mid \bbX)$, the summation in~\eqref{eq:posterior_mean} requires computing $2^{|\ccalU|}$ terms. Our approach to bypass this is to approximate~\eqref{eq:posterior_mean} by taking the sample mean across $M$ samples:
\begin{equation}
    \E{\bbA \mid \bbX} \simeq \frac{1}{M} \sum_{m=1}^M \bbA^{(m)}.
    \label{eq:sample_mean}
\end{equation}
The samples $\bbA^{(m)}$ should be drawn from the posterior
\begin{equation}
    p(\bbA \mid \bbX) 
    \propto p(\bbX \mid \bbA)p(\bbA) ,
    \label{eq:real_posterior}
\end{equation}
where we omitted conditioning on $\bbA^\ccalO$ to avoid cumbersome notation. 
As already explained, computing $p(\bbX \mid \bbA)$ in~\eqref{eq:real_posterior} is, in general, infeasible. 
As a result, rather than trying to obtain $p(\bbA \mid \bbX)$, our approach is to design an algorithm capable of sampling from \eqref{eq:real_posterior} directly without explicitly computing the posterior. 
The design of such an algorithm, which has value per se and can be used to design other point estimators, is tackled in Section~\ref{sec:main_method}.

\section{LANGEVIN FOR SUPPORT ESTIMATION}
\label{sec:main_method}
This section explains how to use annealed Langevin dynamics to solve Problem~\ref{prob:main}.
In Section~\ref{subsec:estimator}, we propose a distribution: i)  that approximates the actual posterior~\eqref{eq:real_posterior} and ii) from which samples can be drawn by leveraging Langevin dynamics.
Based on this approximate posterior, we define an estimator of $\bbA_0$ and show that it is consistent.
Sections~\ref{subsec:langevin} and~\ref{subsec:annealed_langevin} explain how annealed Langevin dynamics works and why it provides a way to incorporate prior information in the estimation $\bbA_0$ via the so-called score function. 
Then, in Section~\ref{subsec:gnn}, we study how to use the dataset $\ccalA$ as prior knowledge by training a graph neural network (GNN) whose output is directly plugged into the Langevin dynamics.
Section~\ref{subsec:algorithm} describes our final algorithm, which combines and summarizes the results of this section.
An illustration of the overall procedure is shown in Figure~\ref{fig:algorithm_summary}.
\begin{figure*}[t]
    \centering
    \includegraphics[width=\figtwocol\textwidth]{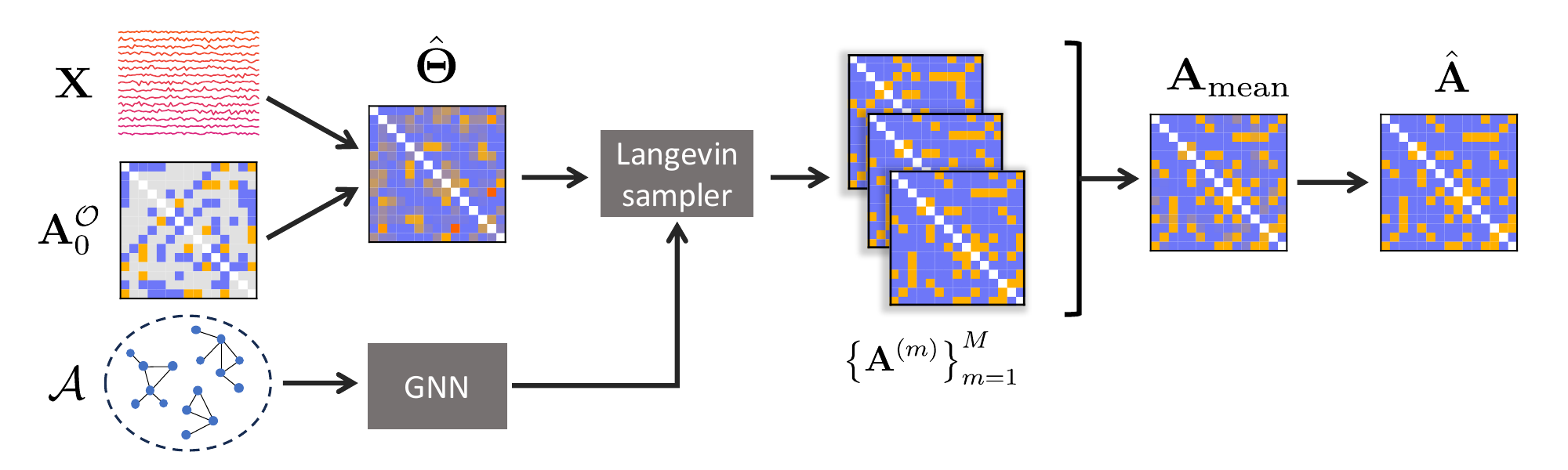}
    \caption{
    Illustration of our final algorithm [cf. Algorithm~\ref{alg:annealed_langevin}].
    % The yellow squares in the adjacency matrices represent $1$, while the purple ones represent $0$.
    The grey entries in $\bbA^{\ccalO}_0$ are what we aim to estimate (i.e., $\bbA^{\ccalU}$).
    If no entries of $\bbA_0$ are known, some can be estimated by bootstrapping $\bbX$, as shown in Section~\ref{sec:num_results}.
    By combining the GGM observations $\bbX$ and the partially observed graph $\bbA^{\ccalO}_0$, we compute the constrained ML estimator $\hbTheta$ by solving~\eqref{eq:glasso}.
    This encodes information about the likelihood of $\bbA$ given $\bbX$.
    To encode information about the prior $p(\bbA)$, we process the dataset $\ccalA$ with a GNN (Section~\ref{subsec:gnn}).
    Then, we can draw $M$ samples from the (approximate) posterior using a Langevin sampler and build any estimator with them, such as $\hbA$ in~\eqref{eq:a_est} that approximates the posterior mean.
    }
    \label{fig:algorithm_summary}
\end{figure*}

\subsection{Proposed estimator}
\label{subsec:estimator}
The first step of our algorithm consists of computing the following estimator for $\bbTheta_0$,
\begin{align} \label{eq:glasso}
    \hbTheta = &
    \argmax_{\bbTheta \succeq 0} \,\,\, \log \det\bbTheta - \tr\left(\bbS\bbTheta \right) \nonumber \\
    & \text{s. to} \,\, \Theta_{ij} = 0 \,\,\,\,\,\, \forall (i,j): A^\ccalO_{0_{ij}}=0,
\end{align}
which corresponds to the positive definite matrix that maximizes the likelihood while respecting the zero pattern known to exist.
The optimization problem in~\eqref{eq:glasso} can be efficiently solved by using the WGL algorithm mentioned in Section~\ref{sec:introduction}.
The constraint is equivalent to setting a penalty $\Lambda_{ij}$ to an arbitrarily large constant for those entries where $\bbA$ is \emph{known} to be $0$, and setting $\Lambda_{ij} = 0$ otherwise.

Let $\ccalL_\bbX\left(\bbTheta\right) = p\left(\bbX \mid \bbTheta \right)$ denote the likelihood of the precision matrix given the observed data.
Then, based on the estimator in~\eqref{eq:glasso}, 
we approximate the posterior $p(\bbA \mid \bbX)$ in~\eqref{eq:real_posterior} as
\begin{equation}
    \hhatp(\bbA \mid \bbX) \propto \ccalL_\bbX\left(\hbTheta \circ (\bbA + \bbI) \right)p(\bbA),
    \label{eq:approx_posterior}
\end{equation}
where we recall that $\circ$ is the entry-wise product. Since the entries of $(\bbA + \bbI)$ are binary, the entry-wise multiplication can be understood as a mask that sets to zero the entries of the precision that are not associated with an edge. Let us suppose now that we can sample from \eqref{eq:approx_posterior} and let $\left\{\bbA^{(m)}\right\}_{m=1}^M$ denote the set of $M$ generated independent samples. 
Then, the set $\left\{\bbA^{(m)}\right\}_{m=1}^M$ can be used to characterize the posterior. 
We focus on the posterior sample mean estimator presented in \eqref{eq:posterior_mean}--\eqref{eq:sample_mean}. 
Then, the estimator for $\bbA_0$ that we propose boils down to 
\begin{equation}
    \hbA = \ind{\left(\frac{1}{M} \sum_{m=1}^M \bbA^{(m)}\right) \geq \tau_k},
    \label{eq:a_est}
\end{equation}
where $\tau_k$ is a tunable threshold that should increase with the sample size $k$.

We aim to prove that~\eqref{eq:a_est} is a consistent estimator of $\bbA_0$, a fundamental result in our study. 
Before delving into such a proof, we establish two important intermediate results.

\begin{lemma} \label{lemma:theta}
    $\hbTheta$ as defined in~\eqref{eq:glasso} is a consistent estimator (as $k \to \infty$) of the true precision matrix $\bbTheta_0$.
\end{lemma}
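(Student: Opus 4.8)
The plan is to reduce Lemma~\ref{lemma:theta} to classical consistency results for constrained Gaussian maximum-likelihood estimation, exploiting the fact that the true precision matrix $\bbTheta_0$ is feasible for~\eqref{eq:glasso}. First I would observe that, because $\bbA_0 = \supp{\bbTheta_0}$ and $\bbA_0^\ccalO$ is the observed part of $\bbA_0$, the true precision $\bbTheta_0$ satisfies the linear constraints $\Theta_{ij}=0$ for all $(i,j)$ with $A^\ccalO_{0_{ij}}=0$; hence $\bbTheta_0$ lies in the (closed, convex) feasible set $\ccalC = \{\bbTheta \succeq 0 : \Theta_{ij}=0 \text{ on the known-zero pattern}\}$. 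The estimator $\hbTheta$ is then the maximizer over $\ccalC$ of the concave Gaussian log-likelihood $\ell_k(\bbTheta) = \log\det\bbTheta - \tr(\bbS\bbTheta)$, which is an $M$-estimator of a fixed-dimensional parameter (the free entries of $\bbTheta$) as $k \to \infty$.

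The key steps, in order, would be: (i) By the strong law of large numbers, $\bbS = \frac1k\bbX\bbX^\top \to \bbSigma_0 = \bbTheta_0^{-1}$ almost surely as $k\to\infty$. (ii) The population log-likelihood $\ell(\bbTheta) = \log\det\bbTheta - \tr(\bbSigma_0\bbTheta)$ is strictly concave on the positive definite cone and, restricted to the affine subspace defining $\ccalC$, has a unique maximizer; a short KKT / first-order argument shows this maximizer is exactly $\bbTheta_0$, since $\nabla\ell(\bbTheta_0) = \bbTheta_0^{-1} - \bbSigma_0 = \bbzero$ and the gradient vanishes in all free coordinates. (iii) Since $\ell_k \to \ell$ uniformly on compact subsets of the positive definite cone (the map $\bbTheta \mapsto \log\det\bbTheta - \tr(\bbS\bbTheta)$ is continuous in $\bbS$, uniformly over such compacts), and the maximizer of the limit is unique and well-separated, a standard argmax-continuity / $M$-estimation consistency theorem (e.g.\ the argmax continuous mapping theorem, or van der Vaart's Theorem~5.7) yields $\hbTheta \to \bbTheta_0$ almost surely. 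A mild technical point to dispatch is that the feasible maximizers stay in a fixed compact set eventually: because $\bbS\to\bbSigma_0 \succ 0$, the likelihood is eventually coercive on $\ccalC$ (the $-\tr(\bbS\bbTheta)$ term penalizes large $\bbTheta$ and $\log\det$ penalizes $\bbTheta$ approaching the boundary), so one restricts attention to a compact neighborhood of $\bbTheta_0$ without loss of generality.

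The main obstacle I expect is handling the boundary of the positive definite cone and the coercivity/compactness argument carefully: one must rule out a sequence of near-maximizers escaping to the boundary or to infinity before invoking uniform convergence. This is not deep but needs care, since $\log\det\bbTheta \to -\infty$ as $\bbTheta$ degenerates and $-\tr(\bbS\bbTheta)\to-\infty$ as $\|\bbTheta\|\to\infty$ (using $\bbS$ eventually positive definite), which together confine the maximizer; quantifying "eventually" via the a.s.\ convergence $\bbS\to\bbSigma_0$ is the crux. An alternative, perhaps cleaner route is to invoke an existing result on the consistency of the constrained (zero-pattern) Gaussian MLE — this is essentially the classical covariance selection estimator of Dempster with a known sparsity pattern, whose consistency in the fixed-$n$, growing-$k$ regime is standard — and simply cite it, noting that our $\ccalO$-constraints are exactly such a fixed zero-pattern constraint. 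I would present the self-contained $M$-estimation argument as the primary proof and mention the classical reference for context.
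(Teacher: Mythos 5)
Your proposal is correct and follows essentially the same route as the paper: both identify $\bbTheta_0$ as the unique population-level constrained maximizer via the first-order condition $\bbTheta^{-1}=\bbTheta_0^{-1}$ together with feasibility of $\bbTheta_0$ for the zero-pattern constraints, invoke the law of large numbers for $\bbS$, and then transfer consistency through continuity of the argmax in the sample covariance. The only difference is in packaging: the paper formalizes the last step via Berge's maximum theorem plus the continuous mapping theorem, whereas you use uniform convergence on compacts plus the standard $M$-estimation argmax theorem, and you are somewhat more explicit about the coercivity/compactness point that the paper's appeal to the maximum theorem leaves implicit.
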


\begin{proof}
    See Section~\ref{sec:proof_theta} in the Supplementary Material (SM).
\end{proof}

\begin{lemma} \label{lemma:p_hat}
    The approximate posterior $\hhatp(\bbA \mid \bbX)$ in~\eqref{eq:approx_posterior} converges in distribution to
    \begin{equation} \label{eq:phat_limit}
        \hhatp(\bbA \mid \bbX) 
        \xrightarrow{k\to\infty}
        \frac{p(\bbA)}{C} \, \delta \left(\bbTheta_0 \circ (\bbA + \bbI) - \bbTheta_0\right),
    \end{equation}
    where $C$ is a constant and $\delta(\cdot)$ is the Dirac delta.
\end{lemma}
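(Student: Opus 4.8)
The plan is to exploit that the set of admissible configurations is \emph{finite} and to combine this with a standard $M$-estimation / information-inequality argument applied to the likelihood term. Conditioned on $\bbA_0^\ccalO$, the matrix $\bbA$ ranges over the finite set $\ccalS := \{\bbA \in \{0,1\}^{n\times n} : A_{ij} = A^\ccalO_{0_{ij}}\ \forall (i,j)\in\ccalO\}$, so ``convergence in distribution'' of $\hhatp(\cdot\mid\bbX)$ just means pointwise (in fact total-variation) convergence of its probability mass function on $\ccalS$. First I would write $\hhatp(\bbA\mid\bbX) = w_k(\bbA)/\sum_{\bbA'\in\ccalS} w_k(\bbA')$ with $w_k(\bbA) := \ccalL_\bbX(\hbTheta\circ(\bbA+\bbI))\,p(\bbA)$ and study the ratios $w_k(\bbA)/w_k(\bbA_0)$. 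From~\eqref{eq:likelihood_theta}, $\frac{1}{k}\log\ccalL_\bbX(\bbT) = \frac12\log\det\bbT - \frac12\tr(\bbS\bbT) - \frac n2\log(2\pi)$ for any positive definite $\bbT$. By Lemma~\ref{lemma:theta}, $\hbTheta\to\bbTheta_0$ almost surely, and by the strong law of large numbers $\bbS\to\bbTheta_0^{-1}$ almost surely; hence, for every $\bbA$ with $\bbTheta_0\circ(\bbA+\bbI)\succ 0$ we get $\frac1k\log\ccalL_\bbX(\hbTheta\circ(\bbA+\bbI))\to\psi(\bbA) := \frac12\log\det(\bbTheta_0\circ(\bbA+\bbI)) - \frac12\tr\big(\bbTheta_0^{-1}(\bbTheta_0\circ(\bbA+\bbI))\big) - \frac n2\log(2\pi)$ almost surely, whereas if $\bbTheta_0\circ(\bbA+\bbI)$ is not positive definite then $\hbTheta\circ(\bbA+\bbI)$ is eventually not positive definite and $\ccalL_\bbX(\hbTheta\circ(\bbA+\bbI)) = 0$.

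The second ingredient is the information inequality: $\bbT\mapsto\log\det\bbT - \tr(\bbTheta_0^{-1}\bbT)$ is strictly concave on the positive-definite cone, and its stationarity condition $\bbT^{-1} = \bbTheta_0^{-1}$ forces $\bbT = \bbTheta_0$, so it is uniquely maximized there. Therefore $\psi(\bbA)\le\psi(\bbA_0)$, with equality if and only if $\bbTheta_0\circ(\bbA+\bbI) = \bbTheta_0$; since $\bbA+\bbI$ has unit diagonal and $\bbA_0$ equals the off-diagonal support of $\bbTheta_0$, this is equivalent to $\bbA\supseteq\bbA_0$ as edge sets. Let $\ccalS^\star := \{\bbA\in\ccalS : \bbTheta_0\circ(\bbA+\bbI) = \bbTheta_0\}$ be the admissible supergraphs of $\bbA_0$; since $\bbA_0\in\ccalS$ by construction, $\bbA_0\in\ccalS^\star\neq\emptyset$, and every $\bbA\in\ccalS^\star$ has $\bbTheta_0\circ(\bbA+\bbI) = \bbTheta_0\succ 0$.

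These two facts deliver the result. For $\bbA\in\ccalS\setminus\ccalS^\star$ we have $\psi(\bbA)<\psi(\bbA_0)$ (reading $\psi(\bbA) = -\infty$ when $\bbTheta_0\circ(\bbA+\bbI)\not\succ 0$), so $\frac1k\log\big(w_k(\bbA)/w_k(\bbA_0)\big)\to\psi(\bbA)-\psi(\bbA_0)<0$ almost surely, hence $w_k(\bbA)/w_k(\bbA_0)\to 0$ and $\hhatp(\bbA\mid\bbX)\to 0$. For $\bbA\in\ccalS^\star$ the leading exponential rates coincide, $\psi(\bbA)=\psi(\bbA_0)$, and it remains to check that the \emph{sub-exponential} parts also agree, i.e.\ $\ccalL_\bbX(\hbTheta\circ(\bbA+\bbI))/\ccalL_\bbX(\hbTheta\circ(\bbA_0+\bbI))\to 1$ almost surely; granting this, $w_k(\bbA)/w_k(\bbA_0)\to p(\bbA)/p(\bbA_0)$, so normalization gives $\hhatp(\bbA\mid\bbX)\to p(\bbA)/C$ with $C := \sum_{\bbA'\in\ccalS^\star}p(\bbA')$. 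Combining both cases, the mass function converges pointwise (almost surely) to $\bbA\mapsto\frac{p(\bbA)}{C}\,\ind{\bbA\in\ccalS^\star}$, which is exactly the right-hand side of~\eqref{eq:phat_limit} once the Dirac delta of a matrix argument is read, over the discrete index set $\ccalS$, as the indicator that its argument is the zero matrix; finiteness of $\ccalS$ then upgrades this to convergence in distribution.

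The crux — and the step I expect to be the main obstacle — is the claim $\ccalL_\bbX(\hbTheta\circ(\bbA+\bbI))/\ccalL_\bbX(\hbTheta\circ(\bbA_0+\bbI))\to 1$ for $\bbA\in\ccalS^\star$. The point to exploit is that $\hbTheta\circ(\bbA+\bbI)$ and $\hbTheta\circ(\bbA_0+\bbI)$ differ only in entries of $\hbTheta$ sitting at positions $(i,j)$ with $(\bbTheta_0)_{ij}=0$, and those entries tend to $0$ by Lemma~\ref{lemma:theta}; combined with the first-order (KKT) conditions of~\eqref{eq:glasso} — $(\hbTheta^{-1})_{ij}=S_{ij}$ at every unconstrained entry — a Taylor expansion of $\log\det$ about $\hbTheta\circ(\bbA_0+\bbI)$ cancels the first-order contribution, leaving a correction governed by second-order terms in a vanishing perturbation. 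Showing that this remaining correction is genuinely $o(1)$ (rather than merely $O(1)$) is the delicate estimate; note that if one had exact support recovery $\supp{\hbTheta}=\bbA_0$ for large $k$ the masking would be trivial and the ratio would be identically $1$, but that is not guaranteed by~\eqref{eq:glasso} and so the cancellation must be argued directly on the likelihood scale. All the remaining ingredients — continuity of $\log\det$ and $\tr$ on the positive-definite cone, strict concavity of $\log\det$, and the law of large numbers — are routine.
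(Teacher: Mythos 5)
Your argument takes a genuinely different route from the paper's. The paper's proof is essentially two lines: it invokes the Bernstein--von Mises theorem to assert that $p(\bbTheta\mid\bbX)$ collapses to $\delta(\bbTheta-\bbTheta_0)$, deduces that $\ccalL_\bbX(\bbTheta)\to 0$ for every $\bbTheta\neq\bbTheta_0$, and then combines this with Lemma~\ref{lemma:theta} to read off~\eqref{eq:phat_limit}. You instead work directly on the finite candidate set, normalize the weights by $w_k(\bbA_0)$, and use the information inequality (strict concavity of $\bbT\mapsto\log\det\bbT-\tr(\bbTheta_0^{-1}\bbT)$) to show that every $\bbA$ with $\bbTheta_0\circ(\bbA+\bbI)\neq\bbTheta_0$ is killed at an exponential rate. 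This is more explicit and more honest than the paper's argument: it makes precise in what sense the convergence holds (pointwise on a finite set), identifies the surviving set $\ccalS^\star$ of admissible supergraphs, and correctly isolates the one step that is actually hard.

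That step --- $\ccalL_\bbX(\hbTheta\circ(\bbA+\bbI))/\ccalL_\bbX(\hbTheta\circ(\bbA_0+\bbI))\to 1$ for $\bbA\in\ccalS^\star$ --- is not merely delicate; as stated it is generically false. The two masked matrices differ by a perturbation $\bbE$ supported on entries where $\Theta_{0_{ij}}=0$, and those entries of $\hbTheta$ vanish only at the rate $O_P(k^{-1/2})$. Expanding the log-likelihood difference, the first-order term is cancelled (partly by the KKT conditions of~\eqref{eq:glasso}, as you note), but the second-order term is $\tfrac{k}{2}\cdot O_P(\norm{\bbE}^2)=O_P(1)$ with a nondegenerate limit --- this is exactly the Wilks phenomenon for nested models that both contain the truth. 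Consequently the limiting weights on $\ccalS^\star$ are \emph{random}, not the deterministic values $p(\bbA)/C$ claimed in~\eqref{eq:phat_limit}; only the support-collapse part of the lemma (zero limiting mass off $\ccalS^\star$, strictly positive mass on all of $\ccalS^\star$) holds as stated. To be clear, this is a defect of the lemma itself at least as much as of your proof: the paper's own argument never confronts it, and the downstream Theorem~\ref{thm:consistency} uses only the support statement plus positivity of the limiting mass, both of which your argument does establish. So your proposal proves everything the paper actually needs, and correctly flags the one claim that neither you nor the paper can prove in the form stated.
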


\begin{proof}
    See Section~\ref{sec:proof_phat} in the SM.
\end{proof}

We now leverage Lemmas~\ref{lemma:theta} and~\ref{lemma:p_hat} to show consistency of $\hat{\bbA}$.

\begin{theorem}\label{thm:consistency}
    $\hbA$ as defined in~\eqref{eq:a_est} is a consistent estimator of the true adjacency matrix $\bbA_0$ when $M \to \infty$ and $\tau_k \xrightarrow{k\to\infty} 1$.
\end{theorem}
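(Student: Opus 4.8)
The plan is to peel off the two limits one at a time, in the order indicated by the statement ($M\to\infty$ first, then $k\to\infty$). First I would dispatch $M\to\infty$: conditionally on $\bbX$, the samples $\bbA^{(1)},\bbA^{(2)},\dots$ are i.i.d.\ draws from the approximate posterior $\hhatp(\cdot\mid\bbX)$ in~\eqref{eq:approx_posterior} and live in a finite set, so the strong law of large numbers gives
\[
  \frac{1}{M}\sum_{m=1}^{M}\bbA^{(m)}\;\xrightarrow[M\to\infty]{\mathrm{a.s.}}\;\bar{\bbA}_k:=\mathbb{E}_{\hhatp}\!\left[\bbA\mid\bbX\right]
\]
entrywise. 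Hence it suffices to show that $\ind{\bar{\bbA}_k\ge\tau_k}$ tends to $\bbA_0$ as $k\to\infty$ with $\tau_k\to1$.

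Next I would identify $\lim_{k\to\infty}\bar{\bbA}_k$ using Lemma~\ref{lemma:p_hat}. By that lemma (which in turn uses $\hbTheta\to\bbTheta_0$ from Lemma~\ref{lemma:theta}), $\hhatp(\bbA\mid\bbX)$ concentrates, in probability over $\bbX$, on the finite set $\ccalS_0:=\{\bbA:\bbTheta_0\circ(\bbA+\bbI)=\bbTheta_0,\ \bbA^{\ccalO}=\bbA_0^{\ccalO}\}$ with limiting probabilities proportional to $p(\bbA)$, where the constraint $\bbA^{\ccalO}=\bbA_0^{\ccalO}$ is the one enforced by the sampler. Since $\bbA_0=\supp{\bbTheta_0}$, an entrywise inspection shows $\bbTheta_0\circ(\bbA+\bbI)=\bbTheta_0$ holds \emph{iff} $A_{ij}=1$ for every off-diagonal pair with $(\bbA_0)_{ij}=1$; i.e., $\ccalS_0$ is exactly the family of constraint-respecting supergraphs of $\bbA_0$, and in particular $\bbA_0\in\ccalS_0$. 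Taking entrywise expectations: for $(i,j)\in\ccalO$, $\bar{\bbA}_{k,ij}=(\bbA_0)_{ij}$ exactly (every sample is pinned there); for $(i,j)\in\ccalU$ with $(\bbA_0)_{ij}=1$, $\bar{\bbA}_{k,ij}\to1$; and for $(i,j)\in\ccalU$ with $(\bbA_0)_{ij}=0$, $\bar{\bbA}_{k,ij}\to c_{ij}:=\sum_{\bbA\in\ccalS_0}A_{ij}\,p(\bbA)\big/\sum_{\bbA\in\ccalS_0}p(\bbA)$. Because $\bbA_0\in\ccalS_0$ and $p(\bbA_0)>0$ (as $\bbA_0\sim p$), the matrix $\bbA_0$ itself contributes positive mass with a $0$ in coordinate $(i,j)$, so $c_{ij}\le1-p(\bbA_0)/\!\sum_{\bbA\in\ccalS_0}p(\bbA)<1$; set $c^{\star}:=\max_{(i,j)\in\ccalU:(\bbA_0)_{ij}=0}c_{ij}<1$.

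Finally I would close with the threshold. For $k$ large enough that $\tau_k>c^{\star}$ (possible since $\tau_k\to1$), every non-edge coordinate in $\ccalU$ eventually satisfies $\bar{\bbA}_{k,ij}<\tau_k$, so $\hbA_{ij}=0$; the observed coordinates are reproduced exactly; the only delicate case is a true-edge coordinate $(i,j)\in\ccalU$, where $\bar{\bbA}_{k,ij}\to1$ but $\tau_k\to1$ as well, so one must compare rates. This is the step I expect to require the real work: one needs $1-\bar{\bbA}_{k,ij}$ to vanish faster than $1-\tau_k$. It does, because the mechanism behind Lemma~\ref{lemma:p_hat} is that the posterior weight of any $\bbA\notin\ccalS_0$, relative to that of $\bbA_0$, is bounded by $\big[\ccalL_\bbX(\hbTheta\circ(\bbA+\bbI))\,p(\bbA)\big]\big/\big[\ccalL_\bbX(\hbTheta\circ(\bbA_0+\bbI))\,p(\bbA_0)\big]$, and by~\eqref{eq:likelihood_theta} this likelihood ratio decays like $e^{-\gamma k}$ for a gap $\gamma>0$ that stays bounded away from $0$ with high probability (using $\hbTheta\to\bbTheta_0$); hence $1-\bar{\bbA}_{k,ij}=O(e^{-\gamma k})$, which is eventually smaller than $1-\tau_k$ for any $\tau_k\to1$ at a sub-exponential rate, so $\bar{\bbA}_{k,ij}\ge\tau_k$ and $\hbA_{ij}=1$. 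Combining the three cases gives $\hbA=\bbA_0$ with probability tending to $1$, i.e.\ $\hbA$ is consistent. To summarize: identifying the limiting posterior mean is immediate from Lemma~\ref{lemma:p_hat}; the main obstacle is the quantitative matching of the two vanishing sequences $1-\bar{\bbA}_{k,ij}$ and $1-\tau_k$ on the true-edge coordinates.
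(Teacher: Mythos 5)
Your decomposition is genuinely different from the paper's. The paper never passes to the exact posterior mean: it first replaces the sampling distribution by its $k\to\infty$ limit from Lemma~\ref{lemma:p_hat}, under which every sample is a (constraint-respecting) supergraph of $\bbA_0$, so each true-edge coordinate of the sample mean equals $1$ exactly and clears any threshold $\tau_k\le 1$ with no rate comparison; false positives are then eliminated by noting that, once $\tau_k$ is close enough to $1$, the event $\hhatA_{ij}=1$ forces all $M$ samples to carry the spurious edge, which has probability $q^M\to 0$ with $q=\P{A^{(1)}_{ij}=1\mid \Theta_{0_{ij}}=0}<1$ --- the same ``$p(\bbA_0)>0$ hence $q<1$'' observation that underlies your bound $c^{\star}<1$. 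You instead send $M\to\infty$ first (SLLN down to $\bar{\bbA}_k$) and only then $k\to\infty$, which buys a cleaner description of what the estimator actually converges to, but at a price: on true-edge coordinates in $\ccalU$ you are forced into the race between $1-\bar{\bbA}_{k,ij}$ and $1-\tau_k$ that the paper's order of limits sidesteps entirely.

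That race is the one place where your argument needs more than the theorem provides. The statement assumes only $\tau_k\to 1$; your resolution requires $\tau_k\to 1$ at a sub-exponential rate, and if $\tau_k$ approached $1$ faster than $1-e^{-\gamma k}$ your three-case analysis would misclassify true edges. (The exponential bound $1-\bar{\bbA}_{k,ij}=O(e^{-\gamma k})$ is itself plausible: for any $\bbA$ missing a true edge, the per-observation log-likelihood gap between $\hbTheta$ and $\hbTheta\circ(\bbA+\bbI)$ converges to a positive constant because $\log\det\bbTheta-\tr\left(\bbTheta_0^{-1}\bbTheta\right)$ is uniquely maximized at $\bbTheta=\bbTheta_0$, though you would also need to handle masked matrices that fail to be positive definite.) So either add the rate hypothesis explicitly or swap the order of limits as the paper does; the remainder of your argument --- the identification of the limit set as the constraint-respecting supergraphs of $\bbA_0$, the exact pinning on $\ccalO$, and the strict inequality $c^{\star}<1$ from $p(\bbA_0)>0$ --- tracks the paper's reasoning step for step.
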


\begin{proof}
According to Lemma~\ref{lemma:p_hat}, the only matrices $\bbA$ with a positive probability of being sampled as $k \to \infty$ are those that satisfy
\begin{equation} \label{eq:false_negatives}
    \bbTheta_0 = \bbTheta_0 \circ (\bbA + \bbI).
\end{equation}

Let $\bbA^{(m)}$ be the $m$-th sample drawn from~\eqref{eq:phat_limit}.
The condition in~\eqref{eq:false_negatives} leads to
\begin{equation} \label{eq:prob_false_negative}
    \P{A^{(m)}_{ij} = 0 \mid \Theta_{0_{ij}} \neq 0} = 0 \ \forall m=1,\ldots,M.
\end{equation}
Since the estimator $\hbA$ from~\eqref{eq:a_est} is the mean of samples that follow~\eqref{eq:prob_false_negative}, for $\tau_k > 0$ we have that
\begin{equation}
    \P{\hhatA_{ij} = 0 \mid \Theta_{0_{ij}} \neq 0} = 0.
\end{equation}
On the other hand, false positives have a non-zero probability of being sampled:
\begin{equation} \label{eq:prob_false_positive}
    \P{\hhatA_{ij} = 1 \mid \Theta_{0_{ij}} = 0} 
    = 
    \P{\sum_{m=1}^M \frac{A^{(m)}_{ij}}{M} \geq \tau_k \middle| \Theta_{0_{ij}} = 0}\!\!.
\end{equation}
In the context of this proof, $\tau_k \to 1$.
Additionally, the summation in~\eqref{eq:prob_false_positive} can be at most $1$, since $A^{(m)}_{ij} \in \{0, 1\}$.
Thus,
\begin{equation} \label{eq:lim_prob_false_positive}
    \P{\hhatA_{ij} \!=\! 1 \!\mid\! \Theta_{0_{ij}} \!=\! 0} \!
    \xrightarrow{k\to\infty} \!
    \P{\sum_{m=1}^M \frac{A^{(m)}_{ij}}{M} = 1 \middle| \Theta_{0_{ij}} \!= \!0}\!\!.
\end{equation}
Another way of writing~\eqref{eq:lim_prob_false_positive} is
\begin{equation} \label{eq:lim_prob_false_positive_bis}
    \P{\sum_{m=1}^M \! \frac{A^{(m)}_{ij}}{M} \!= \! 1 \middle| \Theta_{0_{ij}} \!\!= 0}
    \!\! = \!\!
    \left(\!\P{A^{(1)}_{ij} \!= \!1 \mid  \Theta_{0_{ij}} \!= \!0} \!\right)^M\!\!\!,
\end{equation}
as each sample is drawn independently from the rest.
Since $\bbA_0 \sim p(\bbA)$, then from~\eqref{eq:phat_limit} it follows that
\begin{equation} \label{eq:prob_true_negative}
    \P{\hhatA^{(1)}_{ij} = 0 \mid \Theta_{0_{ij}} = 0} > 0.
\end{equation}
Namely, given that the true adjacency matrix $\bbA_0$ has a prior distribution $p(\bbA)$, it would not be possible for this matrix to have zero probability of being sampled from~\eqref{eq:phat_limit}.
Combining~\eqref{eq:prob_true_negative} with~\eqref{eq:lim_prob_false_positive_bis}, and then taking the limit of~\eqref{eq:lim_prob_false_positive} when $M \to \infty$ we get
\begin{equation} \label{eq:lim_lim_prob_false_positive}
    \P{\hhatA_{ij} = 1 \mid \Theta_{0_{ij}} = 0} 
    \xrightarrow{\substack{k \to \infty \\ M \to \infty}}
    0.
\end{equation}
From~\eqref{eq:lim_lim_prob_false_positive} and~\eqref{eq:prob_false_negative} it follows that, if both $M \to \infty$ and $\tau_k \to 1$, then~\eqref{eq:a_est} converges in probability to the true adjacency matrix when $k \to \infty$.
\end{proof}

In practical scenarios, infinite samples are never available, yet consistency is a desirable property for an estimator.
Furthermore, even though Theorem~\ref{thm:consistency} requires $M \to \infty$, our experiments (Section~\ref{sec:num_results}) reveal that our method outperforms classical methods for relatively small values of $M$.

To compute $\hbA$ as in~\eqref{eq:a_est} we need to be able to sample from the posterior distribution in~\eqref{eq:approx_posterior}.
To this end, we utilize the stochastic diffusion process of Langevin dynamics.

\subsection{Langevin dynamics} \label{subsec:langevin}
The Langevin dynamics algorithm is an MCMC method that allows us to draw samples from a distribution difficult to sample from directly~\citep{MCMCbook, Roberts1996ExponentialCO}.
This sampler's great advantage is that it does not require an expression for the target distribution but rather for the gradient of its logarithm.
Generically, to sample from $p(\bbw)$ via Langevin, only $\nabla_{\bbw} \log p(\bbw)$ is needed.
This gradient receives the name of \emph{score function} and is of paramount relevance in the ensuing sections.

For a generic target distribution $p(\bbw)$, the Langevin dynamics are given by
\begin{equation} \label{eq:langevin_dt}
	\bbw_{t+1} = \bbw_t + \epsilon \nabla_{\bbw}\log p(\bbw_t) + \sqrt{2\epsilon}\, \bbz_t,
\end{equation}
where $t$ is an iteration index, $\epsilon$ is the step size and $\bbz_t \sim \ccalN(0, \bbI)$. 
In each iteration, $\bbw_t$ tends to move in the direction of the score function but is also affected by white noise that prevents it from collapsing in local maxima.
Under some regularity conditions, $\bbw_t$ converges to be a sample from $p(\bbw)$ when $\epsilon \rightarrow 0$ and $t \rightarrow \infty$~\citep{wellinglang}.
%Following this procedure, any amount of samples can be drawn from $p(\bbw)$, with computational burden being the only drawback.
%Nonetheless, the process of sampling from~\eqref{eq:langevin_dt} is easily parallelizable, mitigating computational complexity.

It should be noted that, in our case, we are trying to sample a \emph{discrete} random vector (i.e., a vectorized unweighted adjacency matrix).
Hence, the gradient of the target log-density is not defined in our setting.
A noisy (continuous) version of the random vector is used to circumvent this obstacle.
This idea leads to the \emph{annealed} Langevin dynamics~\citep{annealed_langevin, kawar2021snips}.

\subsection{Annealed Langevin dynamics} \label{subsec:annealed_langevin}
To simplify the notation of what follows, we use $\bbA$ and its half-vectorization $\bba = \vech{\bbA}$ interchangeably.
Consider a noisy version of $\bba$,
\begin{equation}\label{eq:pert_symbs}
    \tba = \bba + \bbv,
\end{equation}
where $\bbv$ represents additive Gaussian noise.
More precisely, let $\{\sigma_l\}_{l=1}^{L}$ be a sequence of \emph{noise levels} such that $\sigma_1 > \sigma_2 > \cdots > \sigma_L > 0$. 
Then, for each noise level we define $\bbv_l \sim \ccalN(\pmb{0}, \sigma_l^2 \bbI)$.
In this setting, $\tba$ is continuous, and the iterative procedure involving annealed Langevin dynamics for our problem is given by
\begin{equation} \label{eq:annealed_langevin}
	\tba_{t+1} = \tba_t + \alpha_t \nabla_{\tba}\log p(\tba_t \mid \bbX) + \sqrt{2\alpha_t}\, \bbz_t,
\end{equation}
where $\alpha_t = \epsilon \cdot \sigma_{l(t)}^2 / \sigma_L^2$ and $l(t)$ is an increasing function mapping time steps $t$ to the annealing noise levels $l$.
Note that the noise present in $\tba_t$ (i.e., the variance of $\bbv_{l(t)}$) decreases with $t$, as given by the varying step size $\alpha_t$.

The annealed version of the dynamics was initially introduced to allow the algorithm to converge faster and perform better~\citep{annealed_langevin}.
However, in our case, it also offers the advantage of rendering the problem differentiable.
Consequently, the annealing enables the computation of the score functions and the use of Langevin dynamics to sample from an originally discrete distribution.
If the noise levels $\{\sigma_l\}_{l=1}^{L}$ and the step size $\epsilon$ are chosen adequately~\citep{annealed_langevin}, after a sufficiently large number of iterations, the sample $\tba_t$ is arbitrarily close to an actual sample from the discrete distribution $p(\bba \mid \bbX)$.
If an actual sample is needed, the noisy sample $\tba_t$ must be projected onto the set $\{0, 1\}$.

Now we need to compute the annealed score $\nabla_{\tba}\log p(\tba \mid \bbX)$ to sample graphs using~\eqref{eq:annealed_langevin}.
To avoid the use of cumbersome notation in what follows, from now on, we drop the reference to $\tba$ in the gradients, as we always take the derivatives with respect to that vector.
Using~\eqref{eq:approx_posterior}, we express the (approximate) annealed posterior score as
\begin{equation}  \label{eq:score_posterior_annealed}
%\small
    \nabla \log \hhatp(\tbA \mid \bbX) =
    \nabla \log \ccalL_\bbX(\tbTheta)  + \nabla \log p(\tbA),
\end{equation}
where we have defined
\begin{equation}
    \tbTheta = \hbTheta \circ (\tbA + \bbI).
\end{equation}
We next discuss each of the two terms in \eqref{eq:score_posterior_annealed}. Starting with $\nabla \log \ccalL_\bbX(\tbTheta)$, referred to as the \textit{annealed likelihood score}, we compute it as~[cf.~\eqref{eq:likelihood_theta}]
\begin{equation} \label{eq:score_likelihood_annealed}
%\small
    \nabla \log \ccalL_\bbX(\tbTheta) = \frac{k}{2} \nabla \log\det(\tbTheta) - \frac{k}{2} \nabla \tr(\bbS \tbTheta),
\end{equation}
with the two gradients in~\eqref{eq:score_likelihood_annealed} being straightforward to compute~\citep{cookbook}. Specifically, let us define $\Delta\tbSigma = \tbTheta^{-1} - \bbS$ and use $\bbT^{ij}$ to denote a matrix whose entries are all equal to zero except the $(i, j)$-th and the $(j, i)$-th ones, which are one.
Then,
\begin{equation} \label{eq:score_likelihood}
    \der{\log \ccalL_\bbX(\tbTheta)}{\tdA_{ij}}
    =
    \frac{k}{2}\tr\left[\left(2\Delta\tbSigma + \Delta\tbSigma \circ\bbI \right)
    \left(\hbTheta \circ \bbT^{ij} \right) 
    \right].
\end{equation}

We shift now to $\nabla \log p(\tbA)$, the second term in \eqref{eq:score_posterior_annealed}, which is referred to as the \textit{annealed prior score} and is more difficult to obtain. 
Note that computing $p(\tbA)$ requires convolving $p(\bbA)$ with the distribution of the noise [cf.~\eqref{eq:pert_symbs}], which is infeasible not only because of the computational burden of that task but also because we do not know $p(\bbA)$.
The alternative that we propose is to \emph{estimate} the annealed prior score $\nabla \log p(\tbA)$ just using samples from the prior $p(\bbA)$ (i.e., the available dataset $\ccalA$), as in~\cite{sevilla2023bayesian}.
We model this estimate as a GNN, where weights are trained on the dataset $\ccalA$, as we explain in Section~\ref{subsec:gnn}.

\subsection{Learned annealed scores} \label{subsec:gnn}
Let $\bbg_{\bbxi}(\tba, \sigma)$ be the output of the GNN we wish to train, with $\bbxi$ being its trainable parameters.
Ideally, the output for a given $\tba$ (with the associated noise level $\sigma_l$ of the current iteration) should be as close as possible to the actual score $\nabla \log p(\tba)$.
The loss function to learn $\bbxi$ should be designed to jointly minimize the mean squared error across all noise levels.
To achieve this, we define the distance
\begin{equation}
\begin{split}
    \ccalD\left(\tba\mid\bbxi, \sigma_l\right)
    &=
    \norm{\bbg_{\bbxi} (\tba, \sigma_l) - \nabla \log p(\tba\mid\bba)}_2^2 \\
    &=
    \norm{\bbg_{\bbxi} (\tba, \sigma_l) - (\bba - \tba)/\sigma_l^{2}}_2^2
\end{split}
\end{equation}
and the associated loss function
\begin{equation}
    \ccalJ\left(\bbxi\mid\{\sigma_l\}_{l=1}^L\right)
    =
    \frac{1}{2L}\sum_{l=1}^L \sigma_l^2 
    \E{\ccalD\left(\tba\mid\bbxi, \sigma_l\right)}.
    \label{eq:loss_cond}
\end{equation}
Following the proof in~\citet{scorematching_autoencoders}, it follows that the output of a GNN trained with~\eqref{eq:loss_cond} correctly estimates $\nabla \log p(\tba)$.
It is worth pointing out that the term $(\bba - \tba)/\sigma_l^2$ is known during training: $\bba$ is one element of $\ccalA$ and both $\tba$ and $\sigma_l$ are the GNN inputs.

The architecture of the GNN must account for the fact that the same graph can be represented by different adjacency matrices, depending on the node labeling.
In this work, we leverage the EDP-GNN~\citep{edpgnn}, designed to perform score-matching on graphs by proposing a permutation equivariant method to model the score function of interest.

\subsection{Final algorithm} \label{subsec:algorithm}
Now we need to put all the pieces together: the proposed (consistent) estimator $\hbA$ (Section~\ref{subsec:estimator}), the Langevin dynamics to get the samples to compute that estimator (Sections~\ref{subsec:langevin} and~\ref{subsec:annealed_langevin}), and the GNN training to estimate the score needed to run the Langevin dynamics (Section~\ref{subsec:gnn}).
The final scheme is described in Algorithm~\ref{alg:annealed_langevin}.
Notice that the score estimator $\bbg_{\bbxi}(\cdot)$ is an input.
Namely, before performing any GGM estimation, a GNN has to be trained with the desired dataset $\ccalA$ in order to be able to compute $\bbg_{\bbxi}\left(\tba, \sigma\right) \simeq \nabla \log p\left(\tba\right)$ for the different noise levels $\sigma_l$.

The first step in Algorithm~\ref{alg:annealed_langevin} is to compute $\hbTheta$ as in~\eqref{eq:glasso}. 
We then draw samples from the approximate posterior distribution $\hat{p}(\bbA \mid \bbX)$ by running the dynamics in~\eqref{eq:annealed_langevin}.
Recall that this is possible because a) we count on a closed-form (approximate) expression for the annealed likelihood~\eqref{eq:score_likelihood}, and b) we have found a way to estimate the annealed prior score by training a GNN with $\ccalA$.
Notice that, in each step, we just update the values of $\tbA^\ccalU$, leaving the known values in $\tbA^\ccalO$ fixed.

After $LT$ steps for each sample, the algorithm generates a continuous matrix $\tbA$.
As we work with unweighted graphs, it is necessary to make the prediction binary-valued.
Therefore, the algorithm draws $\ind{\tbA \geq 0.5}$ as a sample instead, representing an element-wise projection onto the set $\{0, 1\}$.
Following this procedure, we draw $M$ samples and then compute their average.
Lastly, we apply a threshold $\tau_k$ to the approximate posterior mean to compute the consistent estimator $\hbA$.

\begin{algorithm}[t]
    \caption{Annealed Langevin for GGM estimation}\label{alg:annealed_langevin}
    \begin{algorithmic}[1]
        \Require $\bbX, \bbA_0^\ccalO, \bbg_{\bbxi}(\cdot), \{\sigma_l\}_{l=1}^L, M, T, \epsilon, \tau_k$
        \State $\bbS \leftarrow \frac{1}{k} \bbX \bbX^\top$
        \State Compute $\hbTheta$ as in~\eqref{eq:glasso} \label{lst:line:theta}
        \State $\ccalS \leftarrow \{ \}$ \algorithmiccomment{Set of generated samples}
        \Repeat
            \State Initialize $\tbA_0^\ccalU \sim \ccalN(0.5, 0.5\bbI)$
            \State $\tbA_0^\ccalO \leftarrow \bbA_0^\ccalO$ \algorithmiccomment{Fix the known values}
            \For{$l \leftarrow 1\; \text{to}\;  L$}
                \State $\alpha_l \leftarrow \epsilon \cdot \sigma_l^2 / \sigma_L^2$ \algorithmiccomment{Change the noise level}
                \For{$t \leftarrow 1\; \text{to}\; T$}
                    \State Draw $\bbZ_t \sim \ccalN(\pmb{0}, \bbI)$
                    
                    \State Compute $\nabla \log \ccalL_\bbX (\tbTheta_t)$ as in~\eqref{eq:score_likelihood} \label{lst:line:score_likelihood}
                    \State Compute $\bbg_{\bbxi}(\tbA_{t-1}, \sigma_l)$ \label{lst:line:score_prior}
                    \State $\Delta_t \leftarrow \nabla \log \ccalL_\bbX (\tbTheta_t) + \bbg_{\bbxi}(\tbA_{t-1}, \sigma_l)$ \label{lst:line:score}
                    \State $\tbA_t^\ccalU \leftarrow \tbA_{t-1}^\ccalU + \alpha_l \Delta_t^\ccalU + \sqrt{2\alpha_l}\bbZ_t$
                    \State $\tbA_t^\ccalO \leftarrow \tbA_{t-1}^\ccalO$
            %		\State $\tilde{\bbx}_{t+1,l} = \bbV\tilde{\boldsymbol{\chi}}_{t+1,l}$
                \EndFor
                \State $\tbA_0 \leftarrow \tbA_T$
            \EndFor
            \State $\tbA \leftarrow \tbA_T$ \algorithmiccomment{A sample from $p(\tbA \mid \bbX)$}
            \State $\bbA \leftarrow \ind{\tbA \geq 0.5}$ \algorithmiccomment{Project onto $\{ 0, 1 \}$}
            \State $\ccalS \leftarrow \ccalS \bigcup \{\bbA \}$
        \Until $\ccalS$ contains $M$ samples 
        \State Store the sample mean of $\ccalS$ in $\bbA_{\mathrm{mean}}$ \\
        $\hbA \leftarrow \ind{\bbA_{\mathrm{mean}} \geq \tau_k}$ \label{lst:line:threshold} \\
        \Return $\hbA$
    \end{algorithmic}
\end{algorithm}

\section{NUMERICAL RESULTS} \label{sec:num_results}
We carry out simulations in different setups~\footnote{Source code is available at \url{https://github.com/Tenceto/langevin_ggm}.} to demonstrate our scheme's practical relevance and gain insight regarding how informative the prior knowledge is when estimating $\bbA_0$.

In all the simulations, we first generate a fully-known graph and then drop $|\ccalU|$ random entries of $\bba_0$, which we then try to estimate.
We generate $M=10$ samples for each graph to compute~\eqref{eq:a_est}.
We compare our method with:~\footnote{Additional details on hyperparameter choices, properties of datasets, and computation of the reported metrics can be found in Section B of the SM.}
\begin{itemize}[noitemsep, labelindent=0pt, labelwidth=!, wide]
    \item \textbf{WGL}~\citep{weighted_glasso_1}. We penalize the indices in $\ccalU$ with a parameter $\lambda$, use an arbitrarily large penalty where $\bbA_0^\ccalO$ is $0$, and do not penalize the entries where $\bbA_0^\ccalO$ is $1$.
    \item \textbf{Thresholding}. We compute $\hbTheta$ and threshold it.
    \item \textbf{TIGER}. The GGM estimation method in~\citet{tiger} which does not require tuning.
    \item \textbf{GraphSAGE}. A link prediction method (not a GGM estimation method like the others) based on GNNs~\citep{graphsage}. 
    We use the measurements $\bbX$ as node features and $\bbA_0^\ccalO$ as the training set while testing $\bbA_0^\ccalU$.
\end{itemize}
All the thresholds ($\tau_k$ for our algorithm and those used for the thresholding and GraphSAGE methods) and $\lambda$ for WGL are tuned using a training set.
It is worth pointing out that the information given by $\bbA_0^\ccalO$ cannot be used within the TIGER algorithm, as it requires fixing some entries of $\bbTheta$.

Additionally, we use as a benchmark a variant of Algorithm~\ref{alg:annealed_langevin}.
We label it as ``Langevin prior'' (LPr), since it consists of just using prior information (i.e., $\Delta_t = \bbg_{\bbxi}(\tbA_{t-1}, \sigma_l)$ in line~\ref{lst:line:score}).
In other words, we test the algorithm when no observations are available, but only $\ccalA$ is.
Our method is labeled as ``Langevin posterior'' (LPost), using both the prior and likelihood score functions.

We run simulations for three different kinds of graphs.
Two of them, grid graphs and Barabási–Albert graphs~\citep{barabasi_albert}, are synthetic, while the third one consists of ego-nets of Eastern European users collected from the music streaming service Deezer~\citep{deezer}.
Next, we report and discuss the numerical results for all simulated scenarios.
We report the average F1 score over $10$ different train/test splits over $100$ graphs in each case.

\vspace{1mm}
\noindent {\bf Partially unknown grids.} 
We consider grids of different heights and widths with few additional random edges.
Results are shown in Figure~\ref{fig:grids}.

As $k$ increases, the performance of the predictors that use $\bbX$ increases, except for GraphSAGE.
Recall that the presence of an edge between two nodes does not imply a direct correlation between the variables, but rather \emph{conditional dependence} given the rest of the graph.
Considering that this relationship is not captured by local neighborhoods, which is how GraphSAGE aggregates node data, this method is expected to not benefit from including more observations.

For both sizes of $|\ccalU|$ and for the four different ratios $k / |\ccalU|$, LPost outperforms all the other approaches.
However, it is worth pointing out that the gap is much more prominent in Figure~\ref{fig:grids_10} when $|\ccalU|$ is smaller.
LPr's predictions also present a higher F1 score in that case.
This behavior leads to thinking that the information provided by $\ccalA$ decreases as $|\ccalU|$ (and, thus, the dimensions of the space from which the Langevin process is sampling) increases.
A complementary experiment on the performance dependence on $|\ccalA|$ for a different type of graph is presented in Section C.1 of the SM.

When $|\ccalU|$ is small, the prior probability mass is concentrated among fewer possible graphs.
Intuitively, in this case, Langevin generally samples either the same graph or similar ones throughout the different $M$ samples.
Thus, the sample mean yields a satisfactory estimate.
When $|\ccalU|$ is large, the probability mass is spread across many adjacency matrices in the high-dimensional space of $p(\tbA)$.
This leads to Langevin converging to diverse graphs each time we sample, reducing the usefulness of the sample mean as an estimator.
An additional experiment illustrating the performance dependence on $|\ccalU|$ is presented in Section~\ref{sec:performance_u} of the SM.

\begin{figure}[t]
    \centering
    \begin{subfigure}{\figonecol\columnwidth}
        \includegraphics[width=\textwidth]{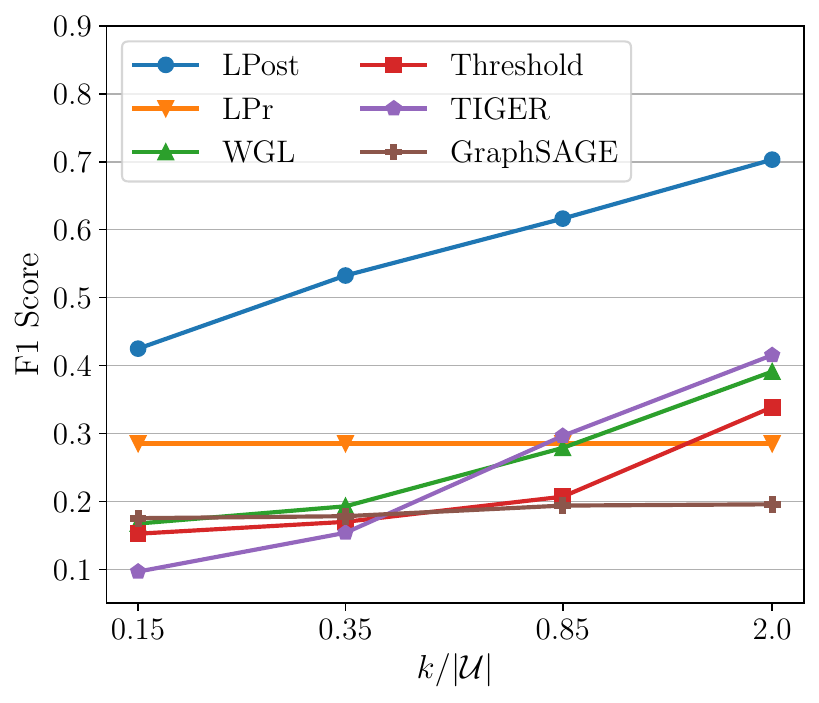}
        \caption{}
        \label{fig:grids_10}
    \end{subfigure}

    \begin{subfigure}{\figonecol\columnwidth}
        \includegraphics[width=\textwidth]{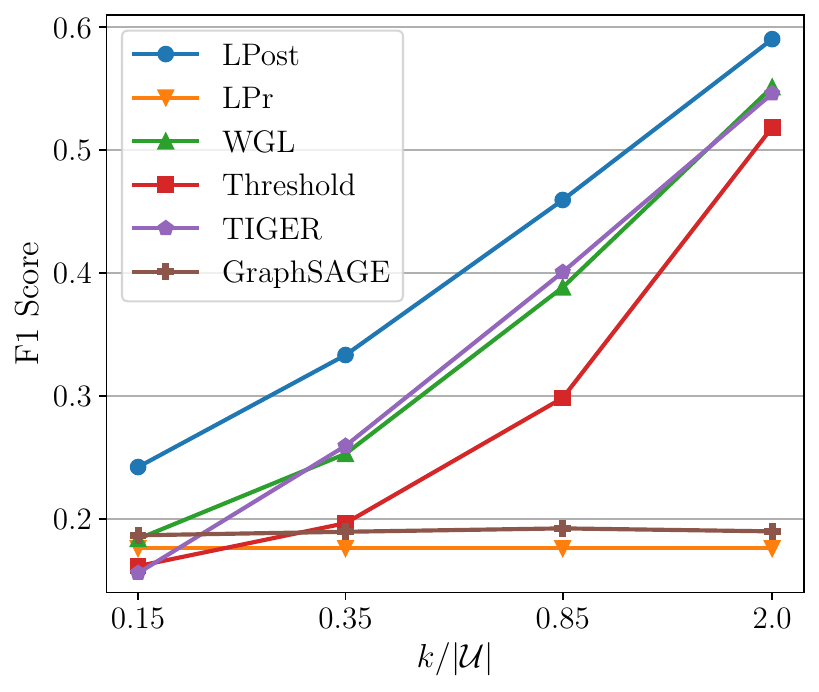}
        \caption{}
        \label{fig:grids_20}
    \end{subfigure}
    \caption{
    F1 score of several methods using grid graphs with $40 \! \leq \! n \! \leq \! 50$ where (a) $10\%$ and (b) $20\%$ of the values in $\bba$ are unknown.
    }
    \label{fig:grids}
\end{figure}

\vspace{1mm}
\noindent {\bf Partially unknown Barabási–Albert graphs.} 
Now we consider the dual Barabási–Albert preferential attachment model~\citep{dual_barabasi_albert}.
All graphs in $\ccalA$ are such that $n \in \{47, 49, 51, 53\}$, while we used graphs with $n \in \{46, 48, 50, 52\}$ nodes to test the algorithm.
This allows us to verify whether the EDP-GNN correctly generalizes the score estimation.
The results are shown in Figure~\ref{fig:barabasi}.

Once again, LPost yields better results than the other algorithms, mainly when $k$ is small.
As more observations are available, all of the methods (except for LPr and GraphSAGE) have approximately the same performance -- the information provided by 
$\ccalA$ becomes negligible compared to that offered by $\bbX$.

The F1 score achieved by LPr is relatively poor due to the large randomness in the graphs.
Namely, it is always worse than the one obtained using WGL.
On the contrary, when the underlying graph presents more structure (for instance, the grid graphs in Figure~\ref{fig:grids_10}), LPr was shown to outperform WGL for some values of $k / |\ccalU|$.
We can conclude that some priors offer more predictive power than others: the more substantial the structure of the graphs, the more useful $p(\bbA)$ becomes.

\vspace{1mm}
\noindent {\bf Partially unknown ego-nets.} 
Now, we consider the graphs in the Deezer dataset with $n \leq 25$.
The results are shown in Figure~\ref{fig:egonets}.

Once again, our method exhibits a higher edge prediction performance than the rest.
The behavior is similar to the one observed in the previous setups: the accuracy of all GGM-based methods increases with $k$.
GraphSAGE slightly outperforms LPost in this scenario for the smallest values of $k$.
Ego-nets are strongly local-based, and GraphSAGE is expected to outperform the rest of the approaches when the information provided by the observations is negligible.

Ego-nets, like grids, present a strong structure, rendering the prior highly predictive.
Even though half of the graph is unknown, the F1 score of LPr is the highest among all the experiments (cf. Figures~\ref{fig:grids} and~\ref{fig:barabasi}).

\begin{figure}[t]
    \centering
    \begin{subfigure}{\figonecol\columnwidth}
        \includegraphics[width=\textwidth]{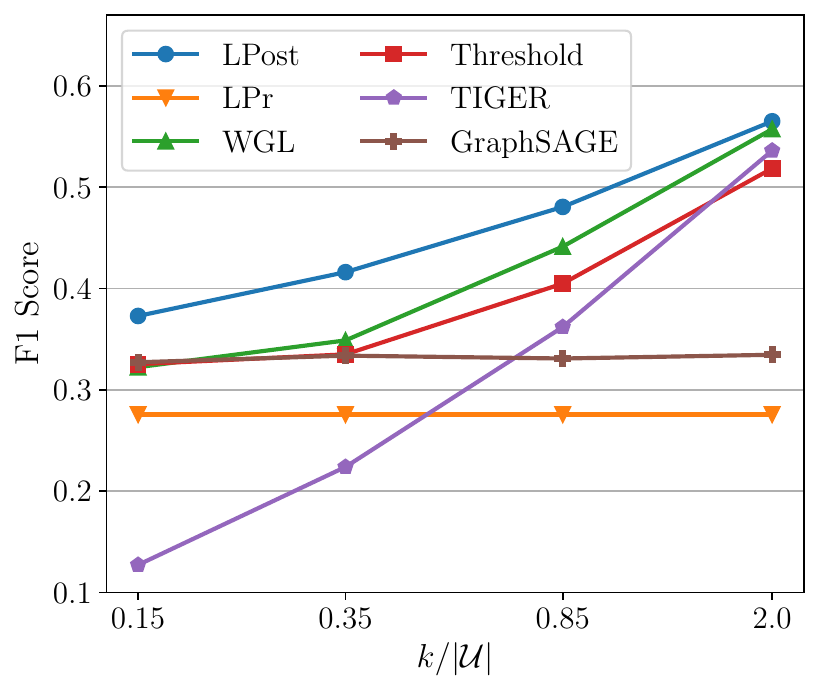}
        \caption{}
        \label{fig:barabasi}
    \end{subfigure}

    \begin{subfigure}{\figonecol\columnwidth}
        \includegraphics[width=\textwidth]{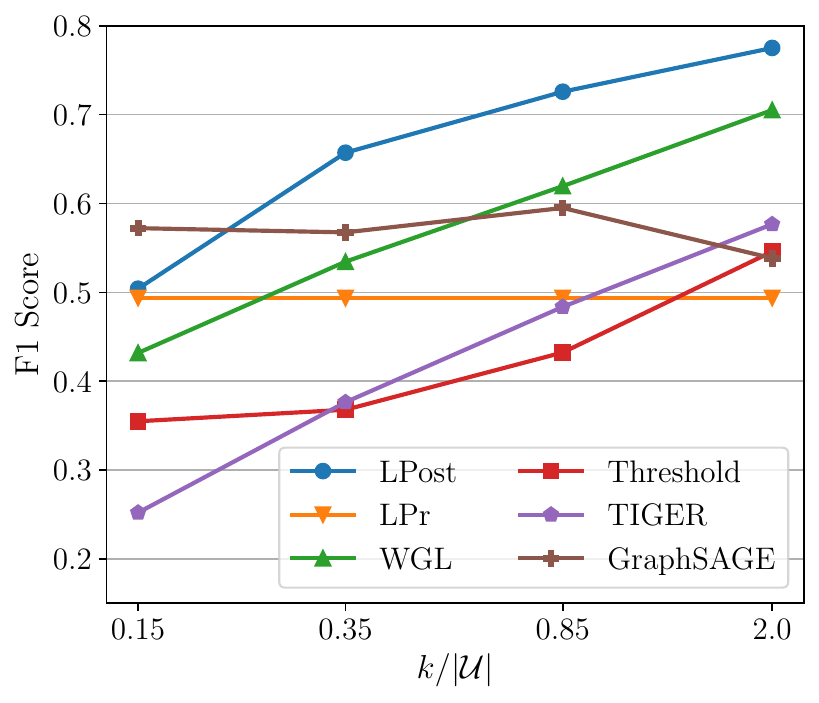}
        \caption{}
        \label{fig:egonets}
    \end{subfigure}
    \caption{
    F1 score of several methods using (a)~Barabási-Albert graphs with $|\ccalU| = 0.1\dim(\bba)$, and (b)~ego-nets with $|\ccalU| = 0.5\dim(\bba)$.
    }
    \label{fig:barabasi_and_ego}
\end{figure}

\vspace{1mm}
\noindent {\bf Fully unknown ego-nets.}
We consider the same dataset as in the last experiment, but now all the entries in $\bbA$ (except for those in the diagonal, which are $0$) are assumed unknown so that $|\ccalO| = 0$.
As shown in Figure~\ref{fig:grids}, the prior offers less predictive power as $|\ccalU|$ increases (see Section~\ref{sec:performance_u} for another experiment investigating this behavior).
Thus, when the entire graph is to be estimated, we propose fixing some of the entries in $\bbA$ with a graphical version of the \emph{random lasso}~\citep{wang2011random}.

To that end, we first compute the GL solution $B$ times (we use $B=50$ in these experiments), where a different bootstrap sample $\bbX^{(b)}$ is used for each iteration to obtain $\hbTheta^{(b)}_\mathrm{boot}$.
Then, the average 
\begin{equation} \label{eq:bootstrap}
    \hbA_\mathrm{boot} = \frac{1}{B}\sum_{b=1}^B \ind{\hbTheta^{(b)}_\mathrm{boot} \neq 0}
\end{equation}
can be interpreted as the probability of each $A_{ij}$ to be $1$.
Thus, for some probability margin $p_m$, we assume known some entries $A_{ij}$ such that
\begin{equation}
    A^{\ccalO}_{ij} =
    \begin{cases} 
        0 & \text{if } \hhatA_{\mathrm{boot}_{ij}} < 0.5 - p_m \\
        1 & \text{if } \hhatA_{\mathrm{boot}_{ij}} > 0.5 + p_m
    \end{cases},
\end{equation}
leaving as unknown all entries $(i,j)$ such that $0.5 - p_m \leq \hhatA_{\mathrm{boot}_{ij}} \leq 0.5 + p_m$.
As $p_m$ increases, the confidence of the estimated fixed values is higher, and $|\ccalO|$ becomes smaller.

The results for two different values of $p_m$ are shown in Figure~\ref{fig:full_egonets}, where we also compare with GL and its bootstrapped counterpart (BGL).
The latter is computed as in~\eqref{eq:bootstrap} and then thresholded.
We observe that a naive implementation of our method falls behind when $|\ccalO| = 0$, an expected behavior as analyzed in the experiments with grid graphs.
However, by leveraging the bootstrapping procedure to fix some entries in $\bbA$ we outperform GL and BGL, indicating that the prior distribution significantly contributes to the prediction accuracy.
% Nonetheless, sampling from $\hhatp(\bbA \mid \bbX)$ to compute the estimator~\eqref{eq:a_est} is better than using only $p(\bbA)$ or only $\ccalL_{\bbX}\left(\hbTheta \circ (\bbA + \bbI)\right)$ (cf. Figure~\ref{fig:deezer_diff_missing}).

Overall, our numerical experiments show that i) our approach leads to better graph estimation results than the classical alternatives considered and ii) the benefits of our approach are more significant when the number of observations is small and the graph presents marked structural features.

% In order to understand how $|\ccalU|$ impacts the estimation performance algorithm, we run an additional experiment.
% We use the test set of the ego-nets and fix $k=100$, varying the percentage of unknown values in $\bba$.
% Since this procedure unbalances the proportion of $0$ and $1$ to be predicted, we use the AUC score instead of accuracy to assess the prediction performance in this experiment.
% Results are shown in Figure~\ref{fig:deezer_diff_missing}.

\begin{figure}[t]
    \centering
    \includegraphics[width=\figonecol\columnwidth]{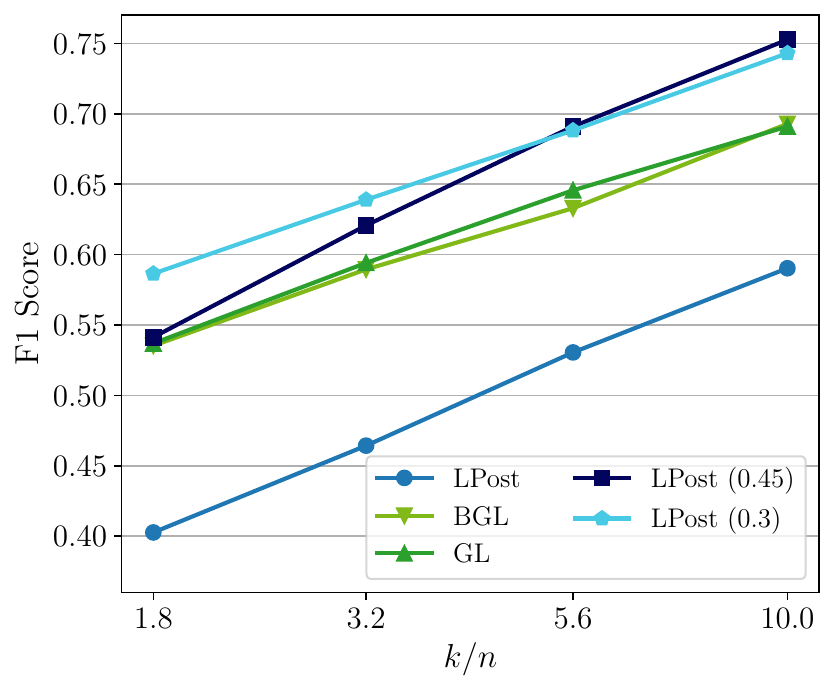}
    \caption{
    F1 score comparison when estimating ego-nets with no known values in $\bbA$.
    The values in parentheses correspond to the $p_m$ used to fix values from $\hbA_\mathrm{boot}$ prior to using our method LPost.
    }
    \label{fig:full_egonets}
\end{figure}

% \begin{table}[]
%     \centering
%     \begin{tabular}{lcccccc}
%         \toprule
%          & $\frac{k}{n} = 1.8$ & $\frac{k}{n} = 3.2$ & $\frac{k}{n} = 5.6$ & $\frac{k}{n} = 10$ \\
%         \midrule
%         LB (0.2)    & 0.565 & 0.617 & 0.669 & 0.730 \\
%         LB (0.3)    & \textbf{0.586} & \underline{0.639} & 0.688 & 0.743 \\
%         LB (0.4)    & \underline{0.570} & \textbf{0.641} & \textbf{0.702} & \underline{0.749} \\
%         LB (0.45)   & 0.541 & 0.621 & \underline{0.691} & \textbf{0.753} \\
%         LPost          & 0.403 & 0.464 & 0.531 & 0.590 \\
%         BGL         & 0.535 & 0.590 & 0.633 & 0.693 \\
%         GL          & 0.537 & 0.594 & 0.646 & 0.691 \\
%         \bottomrule
%     \end{tabular}
%     \caption{
%     F1 score comparison when estimating ego-nets with no known values in $\bbA$.
%     The values in parentheses correspond to the $p_m$ used to fix values from $\hbA_\mathrm{boot}$ prior to using our method LPost.
%     }
%     \label{tab:my_label}
% \end{table}

\section{CONCLUSIONS}
We proposed a GGM estimation algorithm based on annealed Langevin dynamics that allows us to leverage graph structural priors beyond sparsity.
Our approach exploits a set of known graphs to extract the prior distribution. 
We designed an algorithm that, by combining annealed Langevin dynamics with a GNN-based annealed prior score estimator, was able to draw samples from the posterior distribution of interest, namely the distribution of the unknown edges given the known ones, the structural prior, and the GMRF observations.
Finally, we proposed a consistent point estimate for the graph that underlies the GGM based on the sample posterior mean.
Through numerical experiments, we showed our method outperforms classical ones, especially in cases with few observations and highly structured graphs.

%Future work includes generalizing our method to models that consider a relationship between the observations and the adjacency matrix different from the one assumed by GGMs, as well as analyzing how to leverage the prior information even when all the graph has to be estimated.

\subsubsection*{Acknowledgments}
This research was sponsored by the Army Research Office under Grant Number W911NF-17-S-0002; the Spanish (MCIN/AEI/10.13039/501100011033) Grants PID2019-105032GB-I00 and PID2022-136887NB-I00; the Autonomous Community of Madrid within the ELLIS Unit Madrid framework; and the Fulbright U.S. Student Program, in turn sponsored by the U.S. Department of State and the U.S.--Argentina Fulbright Commission.
The views and conclusions contained in this document are those of the authors and should not be interpreted as representing the official policies, either expressed or implied, of the Army Research Office, the U.S. Army, the Fulbright Program, the U.S.--Argentina Fulbright Commission, or the U.S. Government. 
The U.S. Government is authorized to reproduce and distribute reprints for Government purposes, notwithstanding any copyright notation herein.

\bibliography{citations}

%%%%%%%%%%%%%%%%%%%%%%%%%%%%%%%%%%%%%%%%%%%%%%%%%%%%%%%%%%%%

\section*{Checklist}

\begin{enumerate}

 \item For all models and algorithms presented, check if you include:
 \begin{enumerate}
   \item A clear description of the mathematical setting, assumptions, algorithm, and/or model. \textbf{Yes}.
   \item An analysis of the properties and complexity (time, space, sample size) of any algorithm. \textbf{Yes}. Additional analysis is provided in the SM. In particular, for a complexity analysis please refer to Section~\ref{sec:implementation}.
   \item (Optional) Anonymized source code, with specification of all dependencies, including external libraries. \textbf{Yes}.
 \end{enumerate}

 \item For any theoretical claim, check if you include:
 \begin{enumerate}
   \item Statements of the full set of assumptions of all theoretical results. \textbf{Yes}.
   \item Complete proofs of all theoretical results. \textbf{Yes}. Proofs to the auxiliary lemmas are provided in the SM (Section~\ref{sec:proofs}).
   \item Clear explanations of any assumptions. \textbf{Yes}.     
 \end{enumerate}

 \item For all figures and tables that present empirical results, check if you include:
 \begin{enumerate}
   \item The code, data, and instructions needed to reproduce the main experimental results (either in the supplemental material or as a URL). \textbf{Yes}. Please refer to the source code if needed.
   \item All the training details (e.g., data splits, hyperparameters, how they were chosen). \textbf{Yes}. Please refer to Sections~\ref{sec:hyperparams} and~\ref{sec:details} of the SM for more information.
    \item A clear definition of the specific measure or statistics and error bars (e.g., with respect to the random seed after running experiments multiple times). \textbf{Yes}. Additional information can be found in Section~\ref{sec:details} of the SM.
    \item A description of the computing infrastructure used. (e.g., type of GPUs, internal cluster, or cloud provider). \textbf{Yes}. See Section~\ref{sec:implementation} in the SM.
 \end{enumerate}

 \item If you are using existing assets (e.g., code, data, models) or curating/releasing new assets, check if you include:
 \begin{enumerate}
   \item Citations of the creator if your work uses existing assets. \textbf{Yes.}
   \item The license information of the assets, if applicable. \textbf{Yes}. Publication of our source code is available in GitHub under an MIT License.
   \item New assets either in the supplemental material or as a URL, if applicable. \textbf{Yes}.
   \item Information about consent from data providers/curators. \textbf{Not Applicable}.
   \item Discussion of sensible content if applicable, e.g., personally identifiable information or offensive content. \textbf{Not Applicable}.
 \end{enumerate}

 \item If you used crowdsourcing or conducted research with human subjects, check if you include:
 \begin{enumerate}
   \item The full text of instructions given to participants and screenshots. \textbf{Not Applicable}
   \item Descriptions of potential participant risks, with links to Institutional Review Board (IRB) approvals if applicable. \textbf{Not Applicable}
   \item The estimated hourly wage paid to participants and the total amount spent on participant compensation. \textbf{Not Applicable}
 \end{enumerate}

\end{enumerate}

% Use letters for the sections here
\renewcommand{\thesection}{\Alph{section}}
\setcounter{section}{0}
% Add SM to every numbered equation, figure o algorithm
% \renewcommand{\theequation}{SM\arabic{equation}}
% \renewcommand{\thefigure}{SM\arabic{figure}}
% \renewcommand{\thealgorithm}{SM\arabic{algorithm}}

% If your paper is accepted and the title of your paper is very long,
% the style will print as headings an error message. Use the following
% command to supply a shorter title of your paper so that it can be
% used as headings.
%
\runningtitle{Estimation of partially known Gaussian graphical models with score-based structural priors}

% If your paper is accepted and the number of authors is large, the
% style will print as headings an error message. Use the following
% command to supply a shorter version of the authors names so that
% they can be used as headings (for example, use only the surnames)
%
\runningauthor{Martín Sevilla, Antonio G. Marques, Santiago Segarra}

% Supplementary material: To improve readability, you must use a single-column format for the supplementary material.
\onecolumn
\aistatstitle{Estimation of partially known Gaussian graphical models with score-based structural priors \\
Supplementary Materials}

\vfilneg

\section{PROOFS OF LEMMAS} \label{sec:proofs}

\subsection{Proof of Lemma~\ref{lemma:theta}} \label{sec:proof_theta}
Let $\ccalS_{+}^n = \{ \bbV \in \reals^{n \times n} \mid \bbV \succeq 0\}$ be the set of all positive semidefinite matrices.
Then, we define a function $h: \ccalS_{+}^n \to \ccalS_{+}^n$ such that
\begin{align} \label{eq:h_def}
    h(\bbV) = 
    & \argmax_{\bbTheta \succeq 0}
    f(\bbTheta ; \bbV)  \nonumber \\
    & \text{s. to} \,\, \Theta_{ij} = 0 \,\,\,\,\,\, \forall (i,j): A^\ccalO_{ij}=0,
\end{align}
where $f: \ccalS_{+}^n \to \reals$ is $f(\bbTheta ; \bbV) = \log \det\bbTheta - \tr\left(\bbV\bbTheta \right)$.
It immediately follows that the estimator in \eqref{eq:glasso} of the main paper satisfies
\begin{equation} \label{eq:h_S}
    \hbTheta = h(\bbS),
\end{equation}
with $\bbS=\frac{1}{k}\bbX\bbX^\top$ being the sample covariance matrix. The estimator $\hbTheta$ is consistent if $h(\bbS)$ approaches $\bbTheta_0$ as $k$ increases.
Hence, this is what we want to prove next.

First, we compute $h\left(\bbTheta^{-1}_0\right)$.
To this end, we first check what matrix maximizes $f$ without considering the constraint in~\eqref{eq:h_def}.
The maximizer is unique since $f$ is continuous and strictly concave in $\ccalS_+^n$~\citep{ravikumar_glasso}.
Taking the gradient of $f$ with respect to $\bbTheta$ yields
\begin{equation}
    \der{f(\bbTheta ; \bbV)}{\bbTheta} = 2\bbTheta^{-1} - \bbTheta^{-1} \circ \bbI - 2\bbV - \bbV \circ \bbI =  \pmb{0} \iff \bbTheta = \bbV^{-1}.
\end{equation}
Hence, $f(\bbTheta ; \bbTheta_0^{-1})$ is maximized when $\bbTheta = \bbTheta_0$.
Furthermore, $\bbTheta_0$ satisfies the constraint in~\eqref{eq:h_def}.
As a result, it holds that 
\begin{equation} \label{eq:h_theta}
    h\left(\bbTheta^{-1}_0\right) = \bbTheta_0.
\end{equation}

Notice that the mapping $h$ is continuous.
This can be proven through the maximum theorem~\citep{max_theorem}.
Let $\ccalC = \{\bbV \in \reals^{n \times n} \mid V_{ij} = 0 \, \, \, \, \forall (i,j): A^\ccalO_{ij}=0 \}$ be the set of constrained matrices we are interested in.
Any linear combination of matrices in $\ccalC$ is still in $\ccalC$; thus, $\ccalC$ is a convex set.
Therefore, since $\ccalS_+^n$ is convex as well, the intersection $\ccalI = \ccalC \cap \ccalS_+^n$ (which is the set over which $f$ is maximized in~\eqref{eq:h_def} to compute $h$) is convex too.

The function $f$ is continuous and strictly concave in $\ccalI$, since $\ccalI \subseteq \ccalS_+^n$.
Hence, by the maximum theorem under convexity~\citep{max_theorem}, the $\argmax$ mapping of $f(\bbTheta; \bbV)$ within $\ccalI$ is a continuous function of $\bbV$.
Consequently, $h$ is a continuous mapping.

Consider that by the law of large numbers,
\begin{equation} \label{eq:consistency_s}
    \bbS = \frac{1}{k}\sum_{i=1}^k \bbx_i \bbx_i^\top
    \xrightarrow{k\to\infty}
    \E{\bbx \bbx^\top} = \bbTheta_0^{-1}.
\end{equation}
Consequently, by the continuous mapping theorem (which can be applied because $h$ is continuous), considering~\eqref{eq:consistency_s},~\eqref{eq:h_S} and~\eqref{eq:h_theta}, we conclude that
\begin{equation} \label{eq:consistency_theta}
    \hbTheta = h(\bbS)
    \xrightarrow{k\to\infty}
    h\left(\bbTheta_0^{-1}\right) = \bbTheta_0.
\end{equation}

\subsection{Proof of Lemma~\ref{lemma:p_hat}} \label{sec:proof_phat}

To study how $\hhatp(\bbA \mid \bbX)$ behaves as $k\to\infty$, we first analyze the likelihood function $\ccalL_\bbX\left(\bbTheta \right)$.
Using Bayes' theorem, we get
\begin{equation} \label{eq:posterior_theta}
    p(\bbTheta \mid \bbX)\propto \ccalL_\bbX\left(\bbTheta \right)p(\bbTheta).
\end{equation}
By the Bernstein--von Mises theorem, we know that $p(\bbTheta \mid \bbX) \xrightarrow{k\to\infty} \delta(\bbTheta - \bbTheta_0)$, where $\delta(\cdot)$ is the Dirac delta.
Combining this with~\eqref{eq:posterior_theta} it follows that
\begin{equation} \label{eq:likelihood_limit}
    \ccalL_\bbX\left(\bbTheta\right)
    \xrightarrow{k\to\infty}
    0 \quad \forall \bbTheta \neq \bbTheta_0.
\end{equation}

The result in~\eqref{eq:likelihood_limit} holds for any prior $p(\bbTheta)$ since the prior can be considered a constant with respect to $k$.

Given~\eqref{eq:likelihood_limit} and the consistency of $\hbTheta$ in Lemma~\ref{lemma:theta}, we conclude that [cf.~\eqref{eq:approx_posterior} in the main paper]
\begin{equation}
    \hhatp(\bbA \mid \bbX) 
    \xrightarrow{k\to\infty}
    \frac{p(\bbA)}{C} \delta \left(\bbTheta_0 \circ (\bbA + \bbI) - \bbTheta_0\right),
\end{equation}
where $C$ is a normalization constant.

\section{EXPERIMENTAL DETAILS} \label{sec:details}
\subsection{Hyperparameters} \label{sec:hyperparams}
Regarding the Langevin sampler, in all the experiments, we use $L=10$ noise levels, evenly spaced between $\sigma_1 = 0.5$ and $\sigma_L = 0.03$, and $T=300$ steps per level.
We set the step size at $\epsilon = 10^{-6}$.

To tune the regularization parameter $\lambda$ for the graphical lasso, we fit a function of the form $\lambda(k) = a\log(k)^2 + b\log(k) + c$ using a training set and then evaluate that function at inference time for the given value of $k$.
We selected this specific functional form as it showed a very satisfactory fit for all of our cases.
An example is shown in Figure~\ref{fig:tuning_glasso}.

\begin{figure}[ht]
     \centering
     \includegraphics[width=0.9\textwidth]{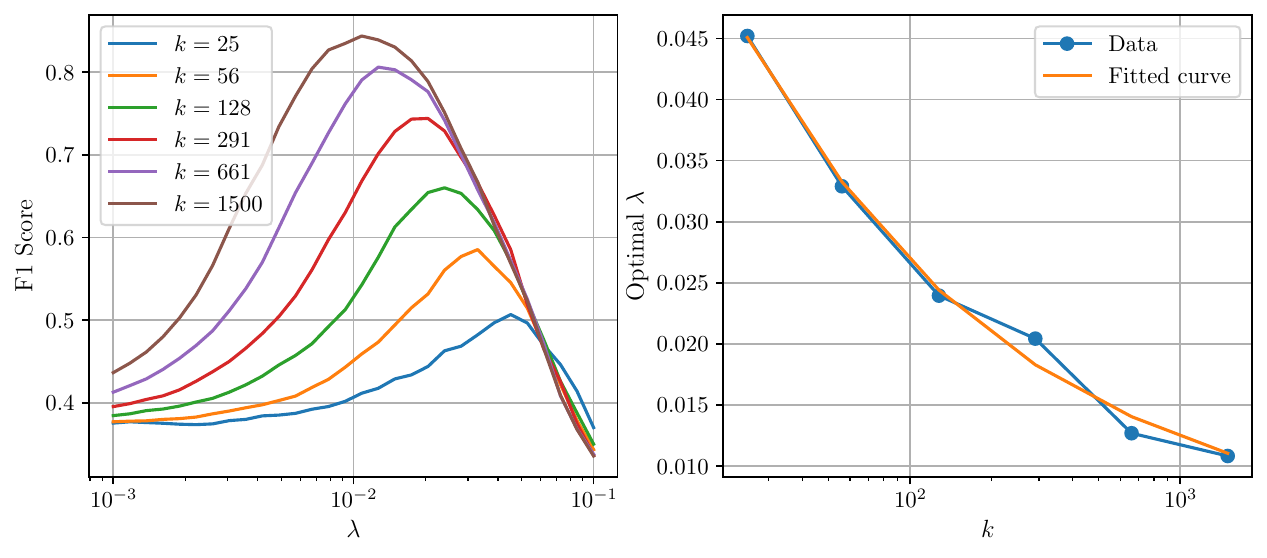}
     \caption{Fitting of $\lambda(k) = a\log(k)^2 + b\log(k) + c$ for the ego-nets dataset with $|\ccalU|=0.5\dim({\bba})$.
     The orange curve on the right is the one used at inference time.}
     \label{fig:tuning_glasso}
\end{figure}

The thresholds used to obtain a binary-valued estimate of $\bbA_0$ when using GraphSAGE, the thresholding of $\hbTheta$ and our method (in this last case, the threshold is $\tau_k$) are set using the same procedure.
Given $R$ simulations (in all of our experiments, $R=100$), we take $R/2$ of those continuous-valued matrices and find the threshold within some grid $\ccalT$ that works the best for each method.
Then, with those tuned values, we threshold the other $R/2$ matrices left and only evaluate performance over them.
All the plots shown in Section~\ref{sec:num_results} from the original paper, as well as in this document, correspond to averages following this procedure $S$ times (i.e., across $S=10$ different train/test splits in our case), each with its threshold.
Algorithm~\ref{alg:metrics} provides a more precise description of our method.

\begin{algorithm}[t]
    \caption{Metric computation}\label{alg:metrics}
    \begin{algorithmic}[1]
        \Require $\{\bba^\ccalU_r, \tba^\ccalU_r\}_{r=1}^R$, $\ccalT$, $S$
        \State $J \leftarrow 0$
        \For{$s \leftarrow 1\; \text{to}\; S$} \algorithmiccomment{Repeat the process with $S$ different splits}
            \State Store $R/2$ randomly chosen tuples $(\bba^\ccalU_r, \tba^\ccalU_r)$ into $\ccalA_{\mathrm{train}}$
            \State Store the other $R/2$ tuples in $\ccalA_{\mathrm{test}}$
            \For{$\tau \in \ccalT$} \algorithmiccomment{Get the best threshold in the grid}
                \State Compute $\mathrm{metric}(\bba^\ccalU_r, \ind{\tba^\ccalU_r \geq \tau})$ for each tuple in $\ccalA_{\mathrm{train}}$
                \State If the average train metric is the best so far, store the current threshold in $\tau^\star$
            \EndFor
            \State Compute $\mathrm{metric}(\bba^\ccalU_r, \ind{\tba^\ccalU_r \geq \tau^\star})$ for each tuple in $\ccalA_{\mathrm{test}}$
            \State Store the average of the test metrics in $J_s$
            \State $J \leftarrow J + J_s$
        \EndFor \\
        \Return $\frac{J}{S}$ \algorithmiccomment{Return the mean across splits}
    \end{algorithmic}
\end{algorithm}

\subsection{Datasets' details} \label{sec:datasets}
For the grids, we generated graphs with $40 \! \leq \! n \! \leq \! 50$.
Additionally, to introduce randomness into the graphs, we uniformly added between $2$ and $5$ edges at random to the edge set $\ccalE$.
We trained a GNN with $|\ccalA| = 5000$ graphs generated with the described procedure.

In the case of the dual Barabási–Albert model, the generation is initialized with an empty graph with $\max(n_1, n_2)$ nodes.
Then, the remaining $n-\max(n_1, n_2)$ vertices are added iteratively.
For each new node, $n_1$ edges are added with probability $\pi$, and $n_2$ edges are added with probability $1-\pi$.
Edges are added following a preferential attachment criterion.
In our case, we set $n_1=2$, $n_2=4$ and $\pi=0.5$.
We simulated $|\ccalA|=1000$ graphs in this setting.

Regarding the ego-nets, we used $|\ccalA|=2926$ of the graphs available in the whole dataset (i.e., only those such that $n \leq 25$) to train the EDP-GNN, leaving $100$ graphs for testing.

\subsection{Implementation details} \label{sec:implementation}
All experiments shown in the main paper and in this document and the training of the EDP-GNNs were run on an NVIDIA DGX A100 system.
Our code is provided as part of the supplementary material and not in a GitHub repository for the sake of anonymity.
The code corresponding to the EDP-GNN implementation was taken from the original repository of~\citet{edpgnn} under a GPL-3.0 license.

Regarding the computational complexity, the key steps to consider from Algorithm~\ref{alg:annealed_langevin} are those in lines~\ref{lst:line:theta}, \ref{lst:line:score_likelihood}, and~\ref{lst:line:score_prior}.
The computation time of $\hbTheta$ is $\ccalO(n^3)$ for dense problems, but much less in sparse ones~\citep{glasso}, and even less if some of the entries are fixed as is the case of~\eqref{eq:glasso}.
However, here, we consider the worst-case scenario.
After computing $\hbTheta$, we evaluate~\eqref{eq:score_likelihood}, which requires to compute the inverse of $\tbTheta$, leading to a time complexity of $\ccalO(n^3)$ as well.
Feed-forwarding the GNN consists of just matrix and vector multiplications that scale as $\ccalO(n^2)$.
Therefore, the dominating steps are the first two.
The step in~\ref{lst:line:score_likelihood} is repeated $L \cdot T$ times per sample, and since the sampling of each $\bbA_i$ is completely parallelizable, the total complexity of Algorithm~\ref{alg:annealed_langevin} is $\ccalO(L \cdot T \cdot n^3)$, considering that typically $LT \gg 1$.

\section{ADDITIONAL EXPERIMENTS} \label{sec:additional_exp}
This section provides more experiments to gain additional insights into our method.

\subsection{Performance dependence on \texorpdfstring{$|\ccalA|$}{|A|}}
We want to analyze how the predictive power of the prior learned by the EDP-GNN changes when different dataset sizes $|\ccalA|$ are used for training.
To this end, we use a different family of graphs, known as \textit{exponential random graph models} (ERGMs), which has a closed-form distribution (up to a normalization constant) that relies on a set of network statistics and parameters~\citep{ergm_def_hunter, ergm_def_snijders}.
Namely, for a vector of $r$ statistics $\bbpsi(\bbA)=\begin{bmatrix}\psi_1(\bbA) & \psi_2(\bbA) &\cdots& \psi_r(\bbA)\end{bmatrix}^\top$ and a set of parameters $\bbbeta \in \reals^r$, the distribution of $\bbA$ is given by
\begin{equation}
    p(\bbA) = \frac{1}{C_\psi(\bbbeta)} \exp \left( \bbbeta^\top \bbpsi(\bbA) \right) .
\end{equation}
The statistics can be, e.g., the number of triangles in the graph, the number of $d$-stars, or the number of edges, among many others. In the simulations, we consider an ERGM distribution with statistics
\begin{equation}
    \bbpsi(\bbA) = \begin{bmatrix}\mathrm{AKS}_\gamma (\bbA) & \frac12 \sum_{ij} A_{ij} \end{bmatrix} ^\top .
\end{equation}
The first one corresponds to the alternated $d$-stars statistic~\citep{ergm_def_hunter}, defined as
\begin{equation}
\mathrm{AKS}_\gamma (\bbA) = \sum_{d=2}^{p-1} (-1)^d \frac{S_d(\bbA)}{\gamma^{d-2}}, \ 
\end{equation}
with $S_d(\cdot)$ being the number of $d$-stars in the given graph and $\gamma$ a constant. 
In our simulations, we set $\gamma = 0.3$.
The second statistic corresponds to the number of edges $|\ccalE|$.
We use $\bbbeta = \begin{bmatrix} 0.7 & -2\end{bmatrix}^\top$ as coefficients.
The graphs we generate have $n=50$ nodes, and we remove $|\ccalU| = 30$ values from $\bbA_0$.
We drop zeros and ones with equal probability so that we can use accuracy instead of F1 score for this experiment.
We generated $|\ccalA| = 1000$ graphs from this distribution to train the EDP-GNN and, to evaluate the impact the dataset size $|\ccalA|$ has on the estimation performance, we compare the edge prediction accuracy when fewer training samples are used.
The reported accuracies correspond to the average over $25$ runs, following the procedure described in Section~\ref{sec:hyperparams}.

\begin{figure}[ht]
     \centering
     \includegraphics[width=0.6\textwidth]{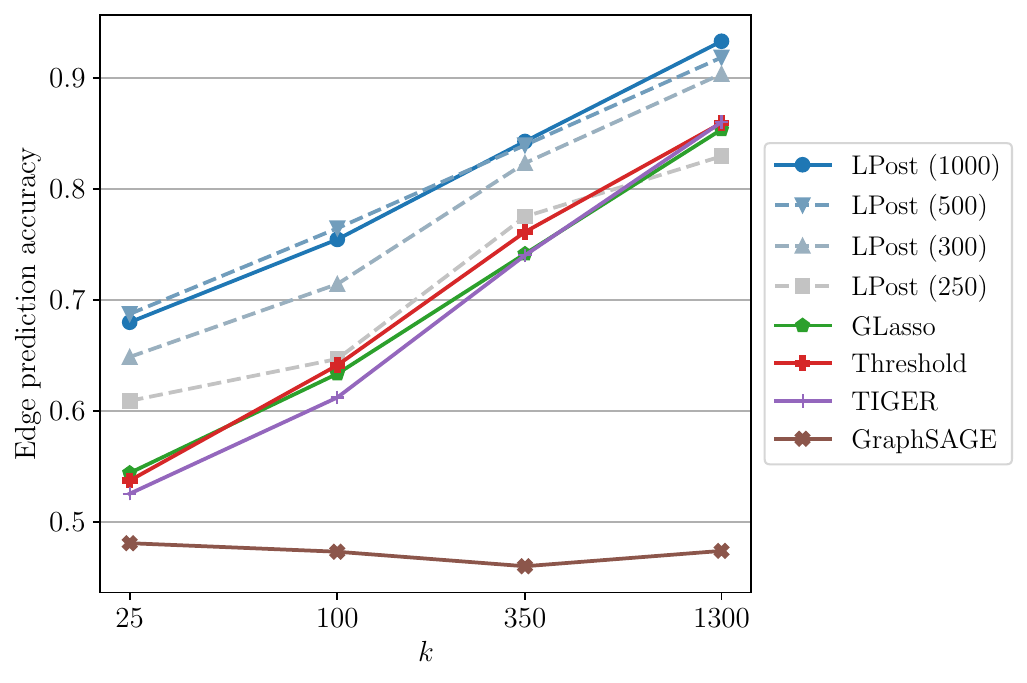}
     \caption{Prediction accuracy of several methods using ERGM graphs with $|\ccalU| = 30$. 
     The comparison includes five versions of Algorithm~\ref{alg:annealed_langevin}, each considering a prior that was learned with datasets $\ccalA$ of different sizes (the value of $|\ccalA|$ is indicated in the legend), as well as three benchmarks available in the literature.}
     \label{fig:ergm}
\end{figure}

Figure~\ref{fig:ergm} shows that the accuracy improves drastically as $|\ccalA|$ increases, as expected.
However, this enhancement seems to saturate as the dataset size approaches $|\ccalA| = 1000$.
All the other GGM estimation methods perform similarly, logically presenting a higher prediction accuracy as $k$ increases.
Notably, whenever $|\ccalA| \geq 300$, all other methods underperform our algorithm.

GraphSAGE performs poorly -- it predicts $0$ or $1$ uniformly at random.
Additionally, the incorporation of additional observations does not increase its accuracy.
As discussed in detail in the main paper, this behavior is expected and associated with GraphSAGE aggregating data using local neighborhoods, which is not a good fit for the setup at hand. 

\subsection{Performance dependence on \texorpdfstring{$|\ccalU|$}{|U|}} \label{sec:performance_u}

In order to understand how $|\ccalU|$ impacts the estimation performance, we run an additional experiment that complements the results presented in Section~\ref{sec:num_results} when using grid graphs with different $|\ccalU|$.
We use the test set of the ego-nets and fix $k=100$, varying the percentage of unknown values in $\bba$.
We run the simulations using three versions of Algorithm~\ref{alg:annealed_langevin}:
\begin{itemize}
    \item \textbf{Posterior (LPost)}. Our original approach, implementing all the steps in Algorithm~\ref{alg:annealed_langevin}.
    \item \textbf{Prior (LPr)}. This simplified version of Algorithm~\ref{alg:annealed_langevin} only exploits prior information. 
    This corresponds to running Algorithm~\ref{alg:annealed_langevin} where in line~\ref{lst:line:score_likelihood} we set $\nabla \log \ccalL_\bbX (\tbTheta_t) = 0$.
    \item \textbf{Likelihood (LL)}. This simplified version of Algorithm~\ref{alg:annealed_langevin} omits the learned prior and uses only the observations $\bbX$. 
    More specifically, this entails to run a version of Algorithm~\ref{alg:annealed_langevin} where in line~\ref{lst:line:score} we set $\bbg(\tbA_{t-1}, \sigma_l) = 0$.
\end{itemize}

We use the AUC score instead of F1 in this experiment to assess the prediction performance.
This allows us to avoid the tuning of $\tau_k$, which is unnecessary for this analysis since we are comparing three versions of Algorithm~\ref{alg:annealed_langevin} that return continuous predictions if no thresholding is applied.
Results are shown in Figure~\ref{fig:deezer_diff_missing}.

The main observations are: i) the information provided by $\ccalA$ decreases as $|\ccalU|$ increases, and ii) LPost consistently outperforms LPr and LL. 
Both are expected results that were observed in previous experiments. 
Analyzing more specific details, we note that the performance of LPr drops rapidly as $|\ccalU|$ increases, while that of LPost decays more slowly. 
The same is true for LL, whose AUC levels are stable and decay slowly. 
Finally, it is worth noticing that when a large portion of the graph is unknown, LPost approaches LL since the information provided by $|\ccalA|$ becomes less valuable.

\begin{figure}[ht]
     \centering
     \includegraphics[width=0.5\textwidth]{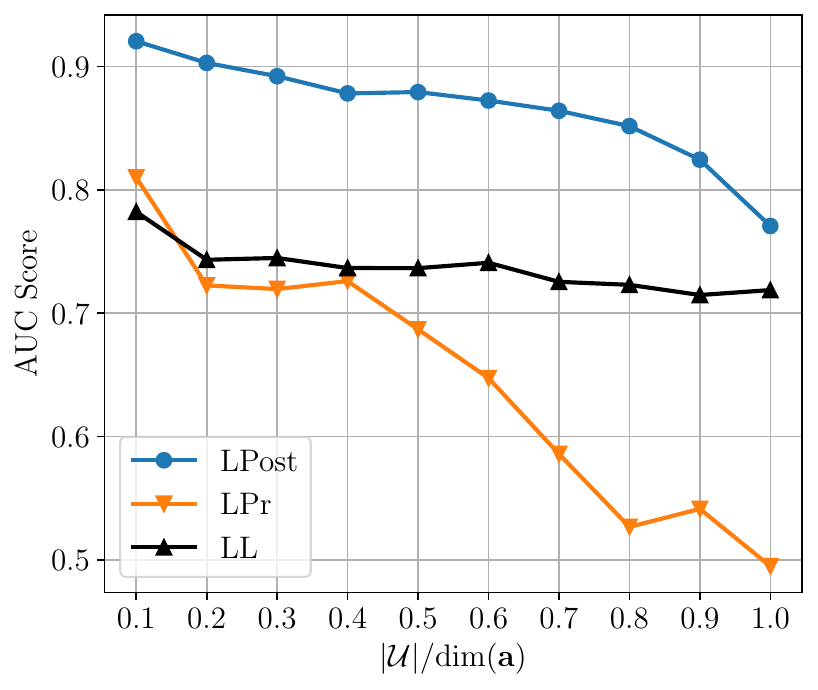}
     \caption{AUC score achieved by different versions Algorithm 1 when using an ego-net prior and $k=100$ observations. The horizontal axis represents different values of $|\ccalU|$. 
     For this experiment we skip the thresholding implemented in line~\ref{lst:line:threshold} of Algorithm~\ref{alg:annealed_langevin}.}
     \label{fig:deezer_diff_missing}
\end{figure}

\end{document}